\documentclass[journal]{IEEEtran}
\usepackage{amsmath,amssymb,amsfonts}
\usepackage{algorithmic}
\usepackage{xcolor}
\usepackage{cite}
\usepackage{graphicx}
\usepackage{url} 
\usepackage{multirow}
\usepackage{booktabs}

\usepackage{algorithm}
\usepackage{subfigure}

\newcommand{\tensor}[1]{\uppercase{\mathcal{#1}}}
\newcommand{\matrics}[1]{\uppercase{\mathbf{#1}}}
\newcommand{\vectors}[1]{\lowercase{\mathbf{#1}}}
\newcommand{\scalars}[1]{\lowercase{#1}}

\newcommand{\graset}[1]{\uppercase{\emph{#1}}}
\newcommand{\grasubset}[1]{\uppercase{\emph{#1}}}
\newcommand{\gravex}[1]{\lowercase{#1}}

\newcommand{\etal}{\emph{et~al. }}

\newcommand{\EN}{{$\epsilon_{N}$}}
\newcommand{\EE}{{$\epsilon_{E}$}}
\newcommand{\ET}{{$\epsilon_{T}$}}
\newcommand{\TOOLNAME}{{MOSC}}

\newcommand{\ctwo}{{\phi_2}}
\newcommand{\ntwo}{{\eta_2}}

\newcommand{\nth}{{\eta_3}}
\newcommand{\nassoc}{{\xi_3}}
\newcommand{\nassocTwo}{{\xi_2}}

\newcommand{\ccut}{{\phi}}
\newcommand{\ncut}{{\eta}}
\newcommand{\cncut}{{\xi}}

\usepackage{amsthm}
\newtheorem{theorem}{Theorem}%

\newtheorem{lemma}{Lemma}

\ifCLASSINFOpdf
\else
\fi

\hyphenation{op-tical net-works semi-conduc-tor}

\begin{document}
\title{Mixed-Order Spectral Clustering for Networks}

\author{Yan Ge, Haiping Lu, and Pan Peng
\thanks{Yan Ge, Haiping Lu and Pan Peng are with the Department of Computer Science, University of Sheffield, U.K. (e-mail: yge5@sheffield.ac.uk, h.lu@sheffield.ac.uk and p.peng@sheffield.ac.uk). }}

\markboth{Preprint}%
{Shell \MakeLowercase{\textit{et al.}}: Bare Demo of IEEEtran.cls for IEEE Journals}

\maketitle

\begin{abstract}
Clustering is fundamental for gaining insights from complex networks, and spectral clustering (SC) is a popular approach. Conventional SC focuses on \textit{second-order} structures (e.g., edges connecting two nodes) without direct consideration of \textit{higher-order} structures (e.g., triangles and cliques). This has motivated SC extensions that directly consider higher-order structures. However, both approaches are limited to considering a single order. This paper proposes a new \textit{Mixed-Order} Spectral Clustering (MOSC) approach to model both second-order and third-order structures simultaneously, with two MOSC methods developed based on Graph Laplacian (GL) and Random Walks (RW). MOSC-GL combines edge and triangle adjacency matrices, with theoretical performance guarantee. MOSC-RW combines first-order and second-order random walks for a probabilistic interpretation. We automatically determine the mixing parameter based on cut criteria or triangle density, and construct new structure-aware error metrics for performance evaluation. Experiments on real-world networks show 1) the superior performance of two MOSC methods over existing SC methods, 2) the effectiveness of the mixing parameter determination strategy, and 3) insights offered by the structure-aware error metrics.

\vspace{2mm}
\begin{IEEEkeywords}
Spectral clustering, network analysis, higher-order structures, mixed-order structures.
\end{IEEEkeywords}
\end{abstract}

\section{Introduction}
\IEEEPARstart{N}{etworks}
 (a.k.a. graphs) are important data structures that abstract relations between discrete objects, such as social networks and brain networks \cite{benson2016higher}. A network is composed of nodes and edges representing node interactions. \emph{Clustering} is an important and powerful tool in analysing network data, e.g., for community detection \cite{liu2014multiobjective,yang2015unified,chang2017community,zhang2017mixed,li2018multiobjective}. 

Clustering aims to
divide the data set into \emph{clusters} (or \emph{communities}) such that the nodes assigned to a particular cluster are similar or well connected in some predefined sense \cite{clauset2004finding,schaeffer2007graph,POM09:communities,fortunato2010community,New12:communities}. 
It helps us reveal functional groups hidden in data.
As a popular clustering method, conventional spectral clustering (SC)~\cite{meila2001learning, ng2002spectral} encodes 
pairwise similarity into an adjacency matrix.
Such encoding inherently restricts SC to \textit{second-order structures} \cite{benson2016higher}, such as undirected or directed edges connecting two nodes.\footnote{Edges are considered as first-order structures in \cite{benson2015tensor} but second-order structures in \cite{zhou2017local}. We follow the terminologies in the latter \cite{zhou2017local} so that the ``order'' here refers to the number of nodes involved in a particular structure.} 
However, in many real-world networks, the minimal and functional structural unit of a network is not a simple edge but a small network subgraph (a.k.a. \emph{motif}) that involves more than two nodes \cite{milo2002network}, which we call a higher-order structure. %

\textit{Higher-order structures} consist of at least three nodes (e.g., triangles, $4$-vertex cliques) \cite{benson2016higher}. 
It can directly capture interaction among three or more nodes.
When clustering networks, higher-order structures can be regarded as fundamental \emph{units} and algorithms can be designed to minimise cutting them in partitioning.
Clustering based on higher-order structures can help us gain new insights and significantly improve our understanding of underlying networks.
For example, triangular structures, with three reciprocated edges connecting three nodes, play important roles in brain networks \cite{sporns2004motifs} and social networks \cite{granovetter1977strength, kossinets2006empirical}. 
More importantly, higher-order structures allow for more flexible modelling. For instance, considering directions of edges, there exist
$13$ different {third-order structures},
but only two different second-order structures \cite{serrour2011detecting}. Thus, the application can drive which
third-order structures to be preserved. 

Thus, there are emerging interests in directly modelling higher-order structures in network clustering. These works can be grouped into four approaches: 1) The first approach constructs an affinity tensor to encode higher-order structures and then reduces it to a matrix \cite{ghoshdastidar2015spectral,ghoshdastidar2014consistency}, followed by conventional SC \cite{ng2002spectral}. These methods, such as tensor trace maximisation (TTM) \cite{ghoshdastidar2017uniform}, are developed in a closely related problem of hypergraph clustering that considers ``hyperedges'' connecting multiple nodes. 2) The second approach develops higher-order SC by constructing a transition tensor based on random walks model and then reduces it to a matrix for conventional SC, such as tensor spectral clustering (TSC) \cite{benson2015tensor, wu2016general}. 
3) The third approach uses a counting and reweighting scheme to capture higher-order structures and reveal clusters \cite{tsourakakis2017scalable}, such as motif-based SC (MSC) \cite{benson2016higher}.\footnote{We have verified that TTM and MSC are equivalent, nevertheless.} 
4) The fourth approach is higher-order local clustering aiming to  reduce computation cost \cite{yin2017local}, such as  High-Order Structure Preserving Local Clustering (HOSPLOC) \cite{zhou2017local}.

\begin{figure} [!t]
	\centering
\subfigure[A network.]{\includegraphics[width=0.13\textwidth]{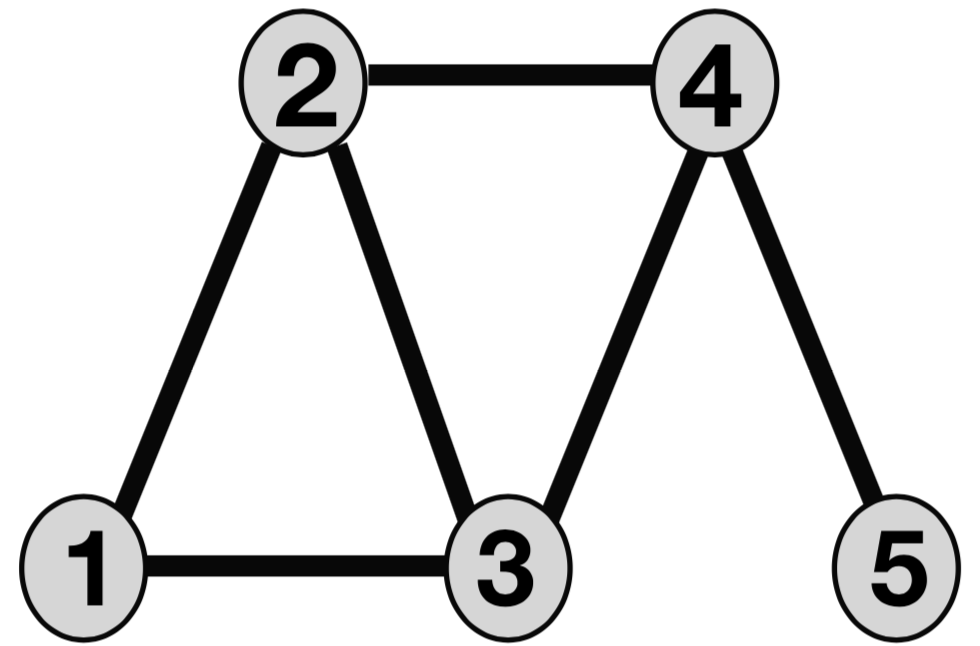}\label{demo_graph}}
\subfigure[Edges.]{\includegraphics[width=19.5mm,height=19.5mm]{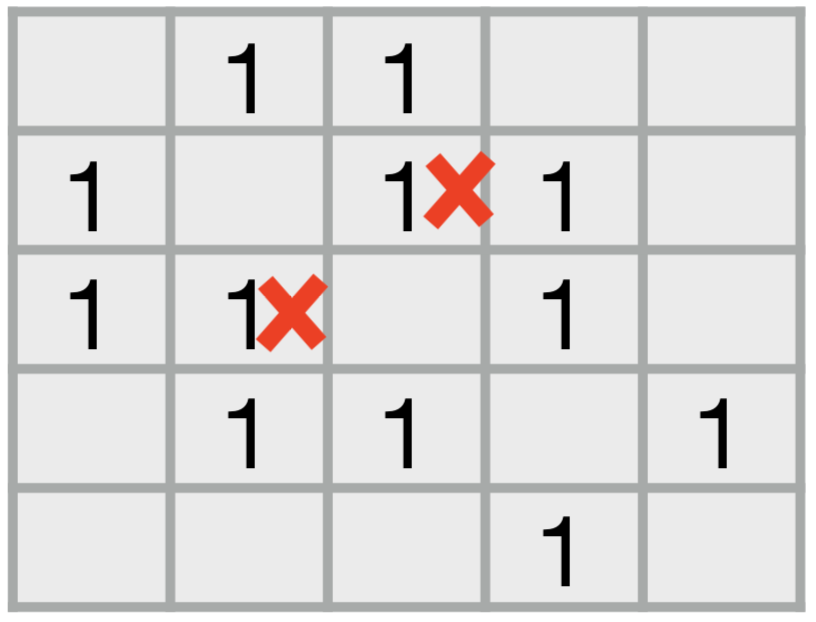}\label{demo_edge}}
\subfigure[Triangles.]{\includegraphics[width=19.5mm,height=19.5mm]{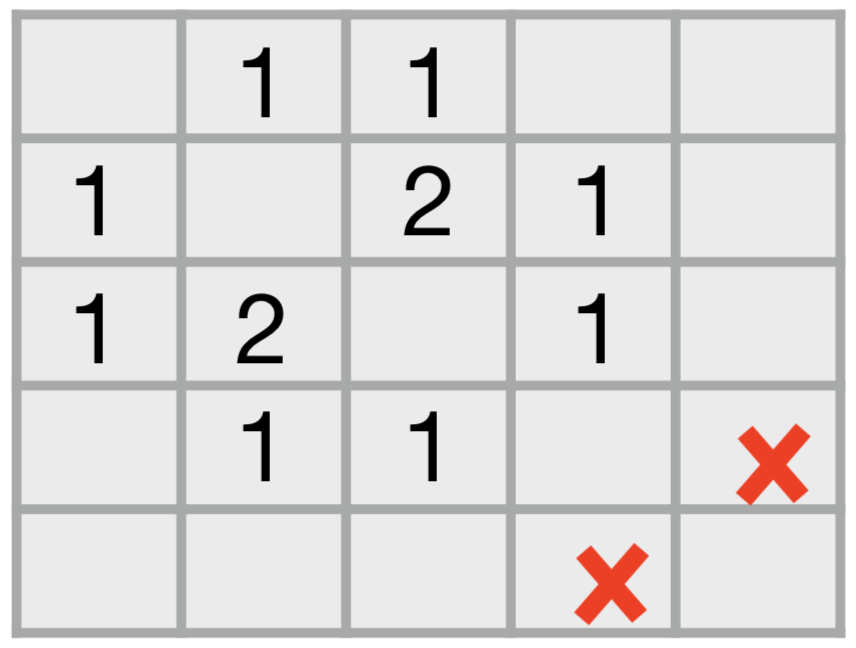}\label{demo_motif}}
\subfigure[Mixed.]{\includegraphics[width=19.5mm,height=19.5mm]{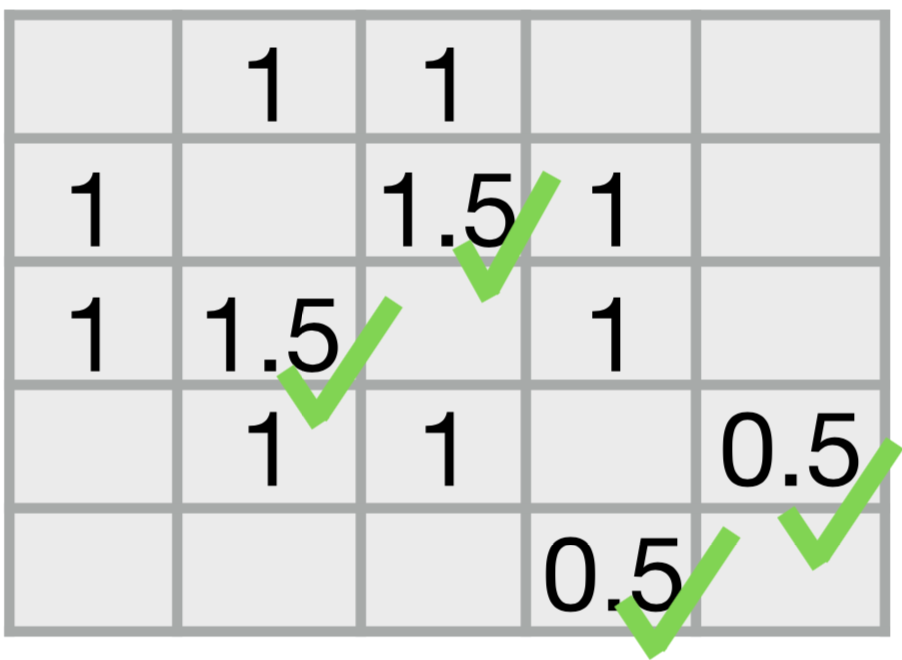}\label{demo_mix}}\vspace{-1mm}
	\caption{The second and third order structures in (a) can not be fully captured by edge/triangle adjacency matrix in (b)/(c). A mixed adjacency matrix in (d) can capture both.}
	\label{demo}\vspace{-6mm}
\end{figure}

However, it should be noted that most networks have both second-order and higher-order structures, e.g., the network in Fig. \ref{demo_graph}, and both can be important. Existing conventional and third-order SC methods model only either second-order or third-order structures, but \emph{not both}, as shown in Figs. \ref{demo_edge} and \ref{demo_motif}, which fail to properly model the importance between nodes 2 and 3, and nodes 4 and 5, respectively. In Figs. \ref{demo_edge} and \ref{demo_motif}, each entry indicates the number of edges and triangles involving two nodes of Fig.~\ref{demo_graph}.

In this paper, we propose a new \textit{Mixed-Order Spectral Clustering} (MOSC) to preserve structures of different orders simultaneously, as in Fig. \ref{demo_mix}. For clear and compact presentation, we focus on two \emph{undirected} \textit{unweighted} structures: edges (second-order structures) and triangles (third-order structures). 
Further extensions can be developed for mixing more than two orders, and/or orders higher than three.

We summarise our four contributions as following:
\begin{enumerate}
\item \textbf{Mixed-order models}. 
We develop two MOSC models: one based on Graph Laplacian (GL) and the other based on Random Walks (RW). MOSC-GL combines edge and triangle adjacency matrices to define a mixed-order Laplacian, with its theoretical performance guarantee derived by proving a mixed-order Cheeger inequality. MOSC-RW combines first-order and second-order RW models for a probabilistic interpretation. The final clusters are obtained via a sweep cut procedure or $k$-means. See Sec. \ref{MOSC-GL} and Sec. \ref{MOSC-RW}. 
\item \textbf{Automatic model selection.}
There is only one hyperparameter in MOSC, i.e., the mixing parameter (ranging from 0 to 1). We propose cut-criteria-based and triangle-density-based strategies to automatically determine the best mixing parameter. See Sec. \ref{autoLambda}. 
\item \textbf{Structure-aware error metrics}. Existing works on higher-order structure clustering use evaluation metrics focused on \emph{mis-clustered nodes} \cite{tsourakakis2017scalable,yin2017local,ghoshdastidar2017uniform}. 
However, mis-clustered nodes do not have a monotonic relationship with mis-clustered structures so they may fail to reflect the errors in structures. Therefore, we propose  structure-aware error metrics to gain insights on the quality of structure preservation. See Sec. \ref{secENode}. %
\item \textbf{Cut criterion}. A cut criterion evaluates the quality of output clusters.
Therefore, we study seven different sweep cut criteria (both second and third orders) and propose a new one by defining mixed-order conductance. We also propose an \textit{optimal cut} to study the lower bound of error metrics and model quality. See Secs.~\ref{cut_cmp} and \ref{sec:performance}.
\end{enumerate}

\section{Preliminaries}\label{sec:preliminaries}
\subsection{Notations}
We denote scalars by lowercase letters, e.g., $\scalars{a}$, 
vectors by lowercase boldface letters, e.g., $\vectors{a}$, matrices by  uppercase  boldface, e.g., $\mathbf{A}$, and tensors by calligraphic letters, e.g., $\tensor{A}$. 
Let $\graset{G} = (\graset{V}, \graset{E})$ be an undirected unweighted graph (network) with $\graset{V}$ = \{\gravex{v$_1$, v$_2$, \dots, v$_n$}\} being the set of $n$ vertices (nodes), i.e., $\scalars{n}$ = $|\graset{V}|$, and \graset{E} being the set of edges connecting two vertices.
\subsection{Normalised Graph Laplacian}
Let $\mathbf{W}\in\mathbb{ R}^{n\times n}$ 
be an unweighted adjacency matrix of $\graset{G}$  
where $\mathbf{W}(i,j)$ = 1 if (\gravex{v$_i$}, \gravex{v$_j$}) 
$\in \graset{E}$, otherwise $\mathbf{W}(i,j)$ = 0.
The degree matrix $\mathbf{D}$ is a diagonal matrix with diagonal entries $\mathbf{D}(i,i) = \sum_{j=1}^n\mathbf{W}(i,j)$, which is the \textit{degree} of vertex v$_i$.
Let $\mathbf{N}=\mathbf{D}-\mathbf{W}$ denote the \emph{Laplacian matrix} of $G$. The \emph{normalised Laplacian} of $G$ is defined as:
\begin{equation}
  \mathbf{{L}} =
\mathbf{D}^{-\frac{1}{2}}\mathbf{N}\mathbf{D}^{-\frac{1}{2}}.   
\end{equation}

Let $\mathbf{W}_T$ be \emph{triangle adjacency matrix}  of $\graset{G}$ with its entry ($i$, $j$) being the number of triangles containing vertices $i$ and $j$, which leads to a  corresponding weighted graph $G_T$  \cite{benson2016higher}. %
Similarly, we can define the \emph{triangle Laplacian} as $\mathbf{N}_T=\mathbf{D}_T - \mathbf{W}_T$ and the \textit{normalised triangle Laplacian} as:
\begin{equation}
 \mathbf{{L}}_T = \mathbf{D}_T^{-\frac{1}{2}}\mathbf{N}_T\mathbf{D}_T^{-\frac{1}{2}},   
\end{equation}
 where $\mathbf{D}_T(i,i) = \sum^n_{j = 1} \mathbf{W}_T(i,j)$.

\subsection{First-Order Random Walks for Second-Order Structures}
We define a second-order transition matrix $\mathbf{P}$ by normalising the adjacency matrix $\mathbf{W}$ to represent edge structures as~\cite{meila2001learning}:
\begin{equation}
\mathbf{P} = \mathbf{D}^{-1} \mathbf{W}\label{trans_matrix}.
\end{equation}
The entry $\mathbf{P}_{ij}$ represents the probability of jumping from vertex \gravex{v$_i$} to \gravex{v$_j$} in one step.
The transition matrix $\mathbf{P}$
represents a random walk process on graph $\graset{G}$\cite{meila2001learning}. 
From random walk perspective, SC can be interpreted as trying to find
a partition of the graph such that the random walk stays long within the same cluster and
seldom jumps between clusters~\cite{von2007tutorial}. 

\subsection{Second-Order Random Walks for Third-Order Structures}
Benson \etal \cite{benson2015tensor} extend the above using a three-dimensional transition tensor to encode triangle structures. 
They firstly define a symmetric \textit{adjacency tensor}  $\tensor{T}\in\mathbb{ R}^{n\times n\times n}$
such that the connectivity information for three vertices  
\{\gravex{v$_i$}, \gravex{v$_j$}, \gravex{v$_k$}\}$\in \graset{V}$ can be represented explicitly in this tensor. All entries in  $\tensor{T}$ with a permutation
of indices $\scalars{i}$, $\scalars{j}$, $\scalars{k}$  have the same value (hence symmetric). Thus, $\tensor{T}$ encodes triangle structures 
in $\graset{G}$ as: 
\begin{equation}\label{adj-tensor}\tensor{T} (i, j, k)=\left\{
\begin{aligned}
1 &&  \textrm{$v_i, v_j, v_k$  form a triangle,}\\
0 && \textrm{otherwise.}
\end{aligned}
\right.\end{equation}
Next, they form a third-order transition tensor $\tensor{P}$ as: 
\begin{equation} \label{trans-tensor}
\tensor{P} (i, j, k) =
\tensor{T}(i, j, k) / \sum_{m=1}^n \tensor{T}(i, m, k),
\end{equation}
where $\sum_{m=1}^n \tensor{T}(i, m, k) \neq$ 0, and  1 $\leq$ $\scalars{i}$, $\scalars{j}$, $\scalars{k}$
$\leq$ $\scalars{n}$. For $\sum_{m=1}^n \tensor{T}(i, m, k) = $ 0, 
Benson \etal \cite{benson2015tensor} set $\tensor{P} (i, j, k)$ to $\frac{1}{n}$.

Here, the entries of $\tensor{P}$ represent the transition probabilities of the second-order random walks.
Each vertex in \graset{V} is considered a distinguishable state.
And probability of jumping to state $\scalars{i}$ relies on the last two states $\scalars{j}$ and $\scalars{k}$\cite{xu2016representing}.
This probabilistic interpretation implies that random walks uniformly choose state $\scalars{i}$ that forms a triangle with 
$\scalars{j}$ and $\scalars{k}$. However, analysing $\tensor{P}$ is challenging, e.g., finding its eigenvectors is NP-hard \cite{malliaros2013clustering}, so it is often reduced to a similarity matrix to apply conventional SC procedures \cite{benson2015tensor}.

\subsection{Spectral Clustering Basics}
Bi-partitioning SC (Algorithm \ref{spectral_algs}) first constructs a  matrix $\mathbf{B}$ to encode structures in the input graph $\graset{G}$. It then computes a dominant eigenvector $\vectors{v}$ of $\mathbf{B}$, thus making use of its \textit{spectrum}. Each entry of $\vectors{v}$ corresponds to a vertex. Next, we sort vertices by the values $\vectors{v}(i)$ (or appropriately normalised values) and consider the set $T_u$ consisting of the first $u$ vertices in the sorted list, for each $1\leq u\leq n-1$. Then the algorithm finds $S=\arg\min_{T_u}\tau(T_u)$, called the \emph{sweep cut} w.r.t. some cut criterion $\tau$. Table \ref{table:cmp_cut} lists eight cut criteria, both second and third orders (edge and triangle based). We omit $G$ in the criterion notation $\tau(S;G)$ in the table.

Matrix $\mathbf{B}$ is a crucial part of SC. Conventional (i.e., second-order) SC starts with the \emph{adjacency matrix} $\mathbf{W}$ of $\graset{G}$ encoding edges (as in Fig. \ref{demo_edge}) and defines $\mathbf{B}$ as $\mathbf{W}$, the Laplacian matrix, or their normalised versions~\cite{schaeffer2007graph}. Third-order SC typically starts with a third-order tensor $\tensor{T}$ encoding triangles, which can be reduced to a matrix $\mathbf{B}$ (as in Fig. \ref{demo_motif}) to apply Algorithm \ref{spectral_algs}.

\begin{algorithm}[!t]
	\caption{Bi-partitioning Spectral Clustering}%
	\begin{algorithmic} [1]\label{spectral_algs}
		\STATE Matrix $\mathbf{B}$ encodes structures of the input graph $\graset{G}$.
		\STATE Compute a dominant eigenvector $\vectors{v}$ of $\mathbf{B}$.
		\STATE $\vectors{v}  \gets$ sorted ordering of $\vectors{v}$ or a normalised version of $\vectors{v}$.
		\STATE \{$S$, $\bar{S}$\} $\gets$ sweep cut of $\vectors{v}$ w.r.t. some cut criteria. 
	\end{algorithmic}
\end{algorithm}\vspace{-5mm}

\begin{small}
\begin{center}
	\begin{table*}[!t]
		\caption{Edge-based and triangle-based cut criteria.}\label{table:cmp_cut}
		\def\arraystretch{1.59}
		\resizebox{\textwidth}{!}{
			\begin{tabular}{ l c c }
				\toprule
				& \textbf{Edge-based cuts} & \textbf{Triangle-based cuts} \\ 
				\midrule
				Conductance ($\ccut$) & 
				\multicolumn{1}{l}{$\ctwo$($S$) = 
					$\frac{cut_2(S)}
					{\min(vol_2(S),vol_2(\bar{S}))}\label{conductance}$ \cite{schaeffer2007graph}}
				& 
				\multicolumn{1}{l}{$\phi_3$($S$) =
					$\frac{cut_3(S)}
					{\min(vol_3(S),vol_3(\bar{S}))}\label{3-conductance}$ \cite{benson2015tensor}}\\ 
				Ncut ($\ncut$)& $\ntwo(S) = $$
				cut_2(S)(\frac{1}{{vol_2(S)}}+
				\frac{1}{vol_2({\bar{S})}}) \label{2-Ncut}$ \cite{shi2000normalized} 
				
				& $\nth(S)$ = $
				cut_3(S)(\frac{1}{{vol_3({S})}}+
				\frac{1}{vol_3(\bar{S})})\label{3-Ncut}$ \cite{li2017inhomogoenous} \\ 
				Nassoc ($\cncut$)& \multicolumn{1}{l}{$\nassocTwo({S})$ =
					$\frac{assoc_2(S)}{vol_2(S)}+
					\frac{assoc_2(\bar{S})}{vol_2(\bar{S})}$\label{second-Nassoc}\cite{shi2000normalized}} 
				& 
				\multicolumn{1}{l}{ $\nassoc({S}) =
					$$\frac{assoc_3(S)}{vol_3(S)}+
					\frac{assoc_3(\bar{S})}{vol_3(\bar{S})} \label{third-Nassoc}$\cite{ghoshdastidar2017uniform}} \\ 
				Expansion ($\alpha$) & 
				\multicolumn{1}{l}{$\alpha_2(S)$ =
					$\frac{cut_2(S)}
					{\min(|S|,|\bar{S}|)}\label{expen2}$ \cite{flake2004graph}}
				& 
				\multicolumn{1}{l}{$\alpha_3(S)$ = $
					\frac{cut_3(S)}
					{\min(|S|,|\bar{S}|)}\label{expen3}$ \cite{benson2015tensor}}\\ 
				\bottomrule
			\end{tabular}
		}
	\end{table*}
\end{center}
\end{small}
\subsection{Cheeger Inequalities and Cut Criteria} Given $\graset{G} = (\graset{V}, \graset{E})$ and a subset $\graset{S}\subseteq
\graset{V}$, let $\bar{S}$ denote the complement of $\graset{S}$. Let $cut_2(S;G)$
denote the \emph{edge cut} of $S$, i.e, the number of edges between $S$ and $\bar{S}$ in $G$. 
Let $vol_2(S;G)$ denote the \emph{edge volume} of $S$, i.e., the total degrees of
vertices in $S$, and $assoc_2(S;G)$ is the total degrees in the subgrapgh induced by vertices in $S$. The \emph{edge conductance} of $S$ is defined as

\begin{equation}
\phi_2(S;G)=\frac{cut_2(S;G)}{\min\{vol_2(S;G),vol_2(\bar{S};G)\}}.
\end{equation}
Other popular edge-based cut criteria are shown in Table \ref{table:cmp_cut} (left column).
The classical Cheeger inequality below relates the conductance of the sweep cut of SC to the minimum conductance value of the graph \cite{fiedler1973algebraic}. 
\begin{lemma}[Second-Order Cheeger Inequality]\label{lem:secondcheeger}
Let $\vectors{v}$ be the second smallest eigenvector of $\mathbf{L}$. Let $T^*$ be the sweep cut of $\mathbf{D}^{-1/2}\vectors{v}$ w.r.t. cut criterion $\phi_2(\cdot;G)$. It holds that  %
\begin{equation}
    \phi_2(T^*;G)\leq2\sqrt{\phi_2^*},
\end{equation}
 where $\phi_2^* = \min_{\grasubset{S}\subset\graset{V}} \  \phi_2(\grasubset{S};G)$
is the minimum conductance over any set of vertices. 
\end{lemma}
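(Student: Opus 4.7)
The plan is to route the bound through the second smallest eigenvalue $\lambda_2$ of $\mathbf{L}$ by proving the two-sided chain $\phi_2(T^*;G) \leq \sqrt{2\lambda_2} \leq 2\sqrt{\phi_2^*}$. The working tool throughout is the vertex-coordinate form of the Rayleigh quotient obtained by substituting $y = \mathbf{D}^{-1/2}x$: since $\mathbf{D}^{1/2}\mathbf{1}$ spans the kernel of $\mathbf{L}$, the Courant--Fischer principle gives
\begin{equation*}
\lambda_2 \;=\; \min_{y\,\perp\,\mathbf{D}\mathbf{1}}\;\frac{\sum_{(i,j)\in E}(y_i-y_j)^2}{\sum_i d_i y_i^2}.
\end{equation*}

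For the easy direction $\lambda_2 \leq 2\phi_2^*$, I would let $S^*$ realise $\phi_2^*$ (assuming $vol_2(S^*;G)\leq vol_2(\bar{S^*};G)$, WLOG) and plug in the test vector $y_i = 1/vol_2(S^*;G)$ for $i\in S^*$ and $y_i = -1/vol_2(\bar{S^*};G)$ otherwise. This vector is orthogonal to $\mathbf{D}\mathbf{1}$, and a direct computation evaluates the numerator as $cut_2(S^*;G)\cdot(1/vol_2(S^*;G)+1/vol_2(\bar{S^*};G))^2$ and the denominator as $1/vol_2(S^*;G)+1/vol_2(\bar{S^*};G)$, so the quotient collapses to at most $2\phi_2^*$.

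The substantive step is $\phi_2(T^*;G) \leq \sqrt{2\lambda_2}$, which I would prove by the standard level-set argument. Writing $y = \mathbf{D}^{-1/2}\vectors{v}$ in vertex coordinates, I would first shift $y$ by an additive constant so that the positive support has edge volume at most $vol_2(V;G)/2$ (the numerator is shift-invariant while the denominator only grows, so the quotient stays $\leq\lambda_2$), and then keep the side with the smaller quotient among $y_+=\max(y,0)$ and $y_-=\max(-y,0)$. Ordering the vertices by $y_+$ and defining $S_t = \{i : y_i^2 > t\}$ for $t\geq 0$, the layer-cake identities $\sum_i d_i y_i^2 = \int_0^\infty vol_2(S_t;G)\,dt$ and $\sum_{(i,j)\in E}|y_i^2 - y_j^2| = \int_0^\infty cut_2(S_t;G)\,dt$, combined with the Cauchy--Schwarz bound $\bigl(\sum_{(i,j)}|y_i^2-y_j^2|\bigr)^2 \leq \sum_{(i,j)}(y_i-y_j)^2 \cdot \sum_{(i,j)}(y_i+y_j)^2$ and the elementary estimate $\sum_{(i,j)\in E}(y_i+y_j)^2 \leq 2\sum_i d_i y_i^2$, deliver a threshold $t^*$ with $\phi_2(S_{t^*};G) \leq \sqrt{2\lambda_2}$. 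Since every $S_t$ is a prefix of the sorted order and $T^*$ is the minimiser of $\phi_2$ over all such prefix sweep cuts, the bound transfers to $T^*$; chaining with the easy direction yields the claimed $\phi_2(T^*;G) \leq 2\sqrt{\phi_2^*}$.

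The main obstacle is the sweep-cut stage: one has to align the layer-cake integrals for cut and volume with the quadratic Cauchy--Schwarz estimate and then extract, by a pigeonhole/averaging step, a single threshold realising the bound rather than an integral average. The variational setup, the easy direction, and the reduction to a nonnegative vector are routine by comparison.
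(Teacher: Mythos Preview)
Your argument is the standard textbook proof of the Cheeger inequality and is correct in outline. Note, however, that the paper does not actually prove this lemma: it is stated as the classical second-order Cheeger inequality with a citation to Fiedler and is used only as a black box in the proof of Theorem~\ref{the_1}. So there is no ``paper's own proof'' to compare against here; you have supplied what the paper omits.

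One small bookkeeping remark on the hard direction: after the additive shift you need both halves of the support to have edge volume at most half the total (so that every level set $S_t$ has $vol_2(S_t;G)\leq vol_2(\bar{S_t};G)$ and hence $\min$ in the conductance denominator picks $vol_2(S_t;G)$). Shifting by a suitable median (in degree-weighted order) achieves this for both $y_+$ and $y_-$ simultaneously; your description only mentions the positive support. Also, the shift does not preserve $\mathbf{D}$-orthogonality to $\mathbf{1}$, but this is harmless since the shifted vector still has Rayleigh quotient at most $\lambda_2$ (the numerator is shift-invariant and the denominator can only increase, exactly as you say). With that minor clarification, the chain $\phi_2(T^*;G)\leq\sqrt{2\lambda_2}\leq\sqrt{2\cdot 2\phi_2^*}=2\sqrt{\phi_2^*}$ goes through.
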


Let $cut_3(S;G)$ denote the \emph{triangle cut} of $S$, i.e., the number of triangles that have at least one endpoint in $S$ and at least one endpoint in $\bar{S}$, and let $assoc_3(S;G)$ count the number of vertices in triangles in the subgraph induced by vertices in $S$. Let $vol_3(S;G)$ denote the \emph{triangle volume} of $S$, i.e., the number of triangle endpoints in $S$.
The \emph{triangle conductance}~\cite{benson2015tensor} of $S$ is defined as
\begin{equation}
    \phi_3(S;G)=\frac{cut_3(S;G)}{\min\{vol_3(S;G),vol_3(\bar{S};G)\}}.
\end{equation}
  It is further proved in \cite{benson2016higher} that for any $S\subset V$, $\phi_3(S;G)=\phi_{2}(S;G_T)$, which leads to the following third-order Cheeger inequality.  Other popular triangle-based cut criteria are summarised in Table \ref{table:cmp_cut} (right column).

\begin{lemma}[Third-order Cheeger Inequality] \label{lem:thirdcheeger}
Let $\vectors{v}$ be the second smallest eigenvector of $\mathbf{L}_T$. Let $T^*$ denote the sweep cut of $\mathbf{D}^{-1/2}_T\vectors{v}$ w.r.t. cut criteria $\phi_{2}(\cdot;G_T)$. It holds that  
\begin{equation}
\phi_3(T^*;G) \leq 4 \sqrt{\phi_3^*},
\end{equation}
 where $\phi_3^* = min_{\grasubset{S}\subset\graset{V}} \  \phi_3(\grasubset{S};G)$.
\end{lemma}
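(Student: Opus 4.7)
My plan is to derive Lemma~\ref{lem:thirdcheeger} as a direct consequence of the second-order Cheeger inequality (Lemma~\ref{lem:secondcheeger}) applied to the weighted triangle graph $G_T$, using the identity $\phi_3(S;G)=\phi_2(S;G_T)$ that the excerpt already credits to \cite{benson2016higher}. I would first verify that identity by a short double-counting argument: since $\mathbf{W}_T(i,j)$ counts triangles containing the pair $\{i,j\}$, the weighted degree of any $v\in S$ in $G_T$ equals twice the number of triangles through $v$, so $vol_2(S;G_T)=2\cdot vol_3(S;G)$; likewise, every triangle whose vertices straddle $(S,\bar S)$ induces exactly two cross-cut unit edges in $G_T$ while contributing once to $cut_3(S;G)$, so $cut_2(S;G_T)=2\cdot cut_3(S;G)$. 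The common factor of $2$ cancels in the conductance ratio, yielding $\phi_3(S;G)=\phi_2(S;G_T)$ and in particular $\phi_3^* = \phi_2^*(G_T)$.

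Given the identity, the bulk of the proof reduces to running the classical spectral sweep on $G_T$. Taking $\vectors{v}$ as the second smallest eigenvector of $\mathbf{L}_T$ and $T^*$ as the sweep cut of $\mathbf{D}_T^{-1/2}\vectors{v}$ with respect to $\phi_2(\cdot;G_T)$, Lemma~\ref{lem:secondcheeger} applied to $G_T$ delivers $\phi_2(T^*;G_T)\leq 2\sqrt{\phi_2^*(G_T)}$. Substituting the identity on both sides produces $\phi_3(T^*;G)\leq 2\sqrt{\phi_3^*}\leq 4\sqrt{\phi_3^*}$, matching (and in fact slightly sharpening) the claimed bound.

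The main technical obstacle is that Lemma~\ref{lem:secondcheeger} is stated for the unweighted normalised Laplacian of an unweighted graph, whereas $G_T$ is weighted, with edge weights given by positive integer triangle counts. I would handle this by invoking the standard weighted extension of Cheeger's inequality, which follows by exactly the same Fiedler-vector rounding argument with $\mathbf{D},\mathbf{W}$ replaced by $\mathbf{D}_T,\mathbf{W}_T$ and no step depending on integrality of the entries. Since this extension preserves the constant $2$, the stated bound of $4$ holds with room to spare; if pressed, one could alternatively subdivide each weighted edge into the appropriate number of parallel unit-weight edges and reduce back to Lemma~\ref{lem:secondcheeger} verbatim.
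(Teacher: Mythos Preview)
Your proposal is correct and follows precisely the route the paper sketches: the sentence preceding Lemma~\ref{lem:thirdcheeger} already records the identity $\phi_3(S;G)=\phi_2(S;G_T)$ (equivalently, Lemma~\ref{lemma:bensor} gives $cut_3=\tfrac12 cut_2$ and $vol_3=\tfrac12 vol_2$ on $G_T$), and the lemma is obtained by applying Lemma~\ref{lem:secondcheeger} to the weighted graph $G_T$. Your observation that this actually yields the sharper constant $2$ rather than $4$, and your remark that the weighted extension of Cheeger's inequality is needed, are both on point.
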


\section{Proposed Mixed-Order Approach}
To model both edge and triangle structures simultaneously, we introduce a new Mixed-Order SC (MOSC) approach, with two methods based on Graph Laplacian (GL) and Random Walks (RW). MOSC-GL combines the edge and triangle adjacency matrices, %
which leads to a mixed-order Cheeger inequality to provide a theoretical performance guarantee. MOSC-RW is developed under the random walks framework to combine the first and second order random walks, providing a probabilistic interpretation. 
Next, we develop an automatic hyperparameter determination scheme and define new structure-aware error metrics. Finally, we present a way to examine the lower bound of the clustering errors. %

\subsection{MOSC Based on Graph Laplacian (MOSC-GL)}\label{MOSC-GL}

MOSC-GL introduces a \textit{mixed-order adjacency matrix} $\matrics{W}_{X}$ that linearly combines the edge adjacency matrix $\matrics{W}$ and the triangle adjacency matrix $\matrics{W}_T$,
with a \textit{mixing parameter} $\lambda\in [0,1]$.  $\matrics{W}_{X}$ can be seen as a weighted adjacency matrix of a weighted graph $G_{X}$, on which we can apply conventional SC (Algorithm \ref{spectral_algs}). Specifically, we first construct the matrix $\matrics{W}_{X}$ and the corresponding diagonal degree matrix $\matrics{D}_{X}$ as
\begin{eqnarray}
\matrics{W}_{X} &=& (1-\lambda)\matrics{W}_T+\lambda\matrics{W}, \label{mix-adj} \\
\matrics{D}_{X} &=& (1-\lambda)\matrics{D}_T+\lambda \matrics{D}. \label{Dmix}
\end{eqnarray}

Let $G_{X}$ denote an undirected weighted graph with adjacency matrix $\matrics{W}_{X}$, we can define a mixed-order Laplacian $\matrics{N}_{X}$ and its normalised  version $\matrics{L}_{X}$ as:
\begin{eqnarray}
\matrics{N}_{X} &=& \matrics{D}_{X} - \matrics{W}_{X}=(1-\lambda)\matrics{N}_T + \lambda\matrics{N}, \nonumber\\
\matrics{L}_{X} &=& \matrics{D}_{X}^{-\frac{1}{2}}\matrics{N}_X\matrics{D}_{X}^{-\frac{1}{2}}.%
\label{eqn:normalized_laplacian}
\end{eqnarray}
Then, we compute the eigenvector corresponding to the second smallest eigenvalue of  $\matrics{L}_{X}$ and perform the sweep cut to find the partition with the smallest edge conductance in $G_{X}$. The MOSC-GL algorithm is summarised in  Algorithm~\ref{alg:MOSC-GL}.

\begin{algorithm}[t]
	\caption{MOSC-GL}
	\begin{algorithmic} [1]\label{alg:MOSC-GL}
		\REQUIRE {$\graset{G}$ = ($\graset{V}$, $\graset{E}$)}, a mixing parameter $\lambda$
		\ENSURE Two node sets $\{S, \bar{S}\}$
		\STATE Construct the edge adjacency matrix $\matrics{W}\in 
		\mathbb{ R}^{n\times n}$.
		\STATE Construct the triangle adjacency matrix $\matrics{W}_T \in 
		\mathbb{ R}^{n\times n}$. 		 
		\STATE Let $\matrics{D}$ be diagonal with $\matrics{D}(i,i)$ = $\sum_i^n \matrics{W}(i,j)$.
		\STATE Let $\matrics{D}_T$ be diagonal with $\matrics{D}_{T}(i,i)$ = $\sum_i^n \matrics{W}_{T}(i,j)$.
		\STATE $\matrics{W}_{X} = (1-\lambda)\matrics{W}_T+\lambda\matrics{W}$.
		\STATE $\matrics{D}_{X} = (1-\lambda)\matrics{D}_T+\lambda \matrics{D}$.
		\STATE $\matrics{N}_{X} = \matrics{D}_{X} - \matrics{W}_{X}=(1-\lambda)\matrics{N}_T + \lambda\matrics{N}$.
		\STATE $\matrics{L}_{X} = \matrics{D}_{X}^{-\frac{1}{2}}\matrics{N}_X\matrics{D}_{X}^{-\frac{1}{2}}$.
		\STATE Compute the second smallest eigenvector $\vectors{v}_X$ of $\matrics{{L}}_{X}$.
		\STATE $\vectors{v}_X  \gets$ Sorting entries of $\matrics{D}_{X}^{-\frac{1}{2}}\vectors{v}_X$. 
		\STATE  \{$S$, $\bar{S}$\} $\gets$ Sweep cut on $\vectors{v}_X$ w.r.t. some cut criteria. 
	\end{algorithmic}
\end{algorithm}

When $\lambda=1$, MOSC-GL is equivalent to SC by Ng \emph{et~al.}~\cite{ng2002spectral} and only considers  second-order structures.
When $\lambda=0$, MOSC-GL is equivalent to motif-based SC \cite{benson2016higher}. MOSC-GL maintains the advantages of traditional SC: computational efficiency,
ease of implementation and mathematical guarantee on the near-optimality of resulting clusters, which we formalise and prove in the following. 

\textbf{Performance guarantee.}
Given a graph $G$ and a vertex set $S$, we define its mixed-order cut and volume as 
\begin{equation}
cut_{X} (S;G) = (1-\lambda)cut_3 (S;G) + \lambda cut_2 (S;G),
\end{equation}
and
\begin{equation}
vol_{X} (S;G) = (1-\lambda)vol_3 (S;G)+\lambda vol_2 (S;G),
\end{equation}
respectively. Then,  we define the \emph{mixed-order conductance} of $S$ as: 
\begin{equation}\label{mix-conductance}
\phi_{X} (S;G)= \frac{cut_{X} (S;G)}{\min(vol_{X} (S;G), vol_{X}(\bar{S};G))},
\end{equation}
which generalises edge and triangle conductance. A partition with small $\phi_{X} (S;G)$ corresponds to clusters with rich edge and triangle structures within the same cluster while few both structures crossing clusters. 
Finding the exact set of nodes $S$ with the smallest $\phi_{X}$ is computationally infeasible. Nevertheless, we can derive a performance guarantee for MOSC-GL to show that the output set obtained from Algorithm \ref{alg:MOSC-GL} is a good approximation.
To prove Theorem \ref{the_1}, we need the following Lemma.

\begin{lemma}[Lemma 4 and 1 in \cite{benson2016higher}]\label{lemma:bensor}
Let G  = (V, E) be an undirected, unweighted graph and let $G_T$ be 
the weighted graph for the triangle adjacency matrix. Then for any $S \subset V$, it holds that
\begin{eqnarray*}
cut_3(S;G)&=&\frac12 cut_2(S;G_T), \\
vol_3(S;G)&=&\frac12 vol_2(S;G_T).
\end{eqnarray*}
\end{lemma}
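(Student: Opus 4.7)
Both identities can be proved by a double-counting argument over the triangles of $G$. Quantities on $G_T$ are sums over pairs $(i,j)$ weighted by the number of triangles containing both $i$ and $j$, whereas quantities on $G$ are sums over triangles themselves, so the plan is to rewrite each side as a sum over triangles and compare term by term.

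For the cut identity, I would start from $cut_2(S;G_T) = \sum_{\{i,j\}:\, i \in S,\, j \in \bar{S}} \mathbf{W}_T(i,j)$, expand each weight as the number of triangles of $G$ containing both $i$ and $j$, and swap the order of summation to obtain $cut_2(S;G_T) = \sum_{T} (\text{number of edges of } T \text{ that cross the cut})$. A triangle whose three vertices lie entirely in $S$ or entirely in $\bar{S}$ contributes zero, while a triangle that is split by the cut contributes exactly two, independently of whether one or two of its vertices sit in $S$: the two edges incident to the ``lone'' vertex both cross the cut, and the third does not. Since $cut_3(S;G)$ is by definition the number of split triangles, this yields $cut_2(S;G_T) = 2\,cut_3(S;G)$.

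For the volume identity, I would expand $vol_2(S;G_T) = \sum_{i \in S} \mathbf{D}_T(i,i) = \sum_{i \in S} \sum_{j} \mathbf{W}_T(i,j)$ and again regroup by triangles. Each triangle $T$ of $G$ contributes once for every ordered pair $(i,j)$ with $i \in S \cap T$ and $j \in T \setminus \{i\}$; for each vertex of $T$ lying in $S$ there are exactly two valid choices of $j$, so $T$'s total contribution is $2\,|T \cap S|$. Summing over all triangles yields $2 \sum_T |T \cap S| = 2\,vol_3(S;G)$, since $vol_3(S;G)$ counts, across all triangles, the number of triangle endpoints lying in $S$.

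The computations are elementary and I expect no serious obstacle. The only point where care is needed is in the cut case, where one must check that the inner count equals the \emph{same} constant $2$ in both ``split'' sub-cases (one vertex of $T$ in $S$ versus two vertices of $T$ in $S$). Once this invariant is made explicit, the universal factor of $2$ is visible and both identities follow without further calculation.
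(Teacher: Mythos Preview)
Your double-counting argument is correct: regrouping the weighted edge sums in $G_T$ by triangles and then checking that every split triangle contributes exactly two crossing edges (regardless of whether one or two of its vertices lie in $S$) is precisely what is needed, and the volume computation via $2|T\cap S|$ per triangle is likewise sound.

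Note, however, that the paper does not supply its own proof of this lemma at all: it is quoted as Lemmas~4 and~1 of \cite{benson2016higher} and invoked as a black box inside the proof of Theorem~\ref{the_1}. So there is no in-paper argument to compare against; your write-up simply fills in what the paper outsources to the cited reference, and the double-counting you give is the natural (and standard) route.
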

 
\begin{theorem}[Mixed-order Cheeger Inequality] \label{the_1}
Given an undirected graph $G$, let $T^*$ denote the set outputted by MOSC-GL (Algorithm \ref{alg:MOSC-GL}) w.r.t. the cut criterion $\phi_{2}(\cdot;G_X)$. Let $\phi_X^*=\min_{S\subseteq V} \phi_{X}(S;G)$ be the minimum mixed-order conductance over any set of vertices. Then it holds that
	$$
	\phi_{X}(T^*;G)  \leq 2\sqrt{2\phi_X^*}.
	$$
\end{theorem}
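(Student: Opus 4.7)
The plan is to reduce Theorem~\ref{the_1} to the classical second-order Cheeger inequality (Lemma~\ref{lem:secondcheeger}), applied not to the input graph $G$ itself but to the weighted graph $G_X$ constructed by Algorithm~\ref{alg:MOSC-GL}. The crucial observation is that MOSC-GL is nothing more than the standard normalised-Laplacian spectral-clustering procedure run on $G_X$: the matrix $\matrics{L}_X$ is the normalised Laplacian of the weighted graph with adjacency matrix $\matrics{W}_X$ and degree matrix $\matrics{D}_X$, the sweep is over $\matrics{D}_X^{-1/2}\vectors{v}_X$, and $T^*$ is chosen to minimise $\phi_2(\cdot;G_X)$. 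Lemma~\ref{lem:secondcheeger} therefore gives the starting estimate
\begin{equation*}
\phi_2(T^*;G_X) \leq 2\sqrt{\phi_2^*(G_X)}, \qquad \phi_2^*(G_X):=\min_{S\subseteq V}\phi_2(S;G_X).
\end{equation*}

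The remaining work is to translate this bound between the edge conductance of the weighted graph $G_X$ and the mixed-order conductance of the original graph $G$. This is where Lemma~\ref{lemma:bensor} enters: substituting $cut_3(S;G) = \tfrac12 cut_2(S;G_T)$ and $vol_3(S;G) = \tfrac12 vol_2(S;G_T)$ into the definitions of $cut_X$ and $vol_X$, and expanding $cut_2(\cdot;G_X)$ and $vol_2(\cdot;G_X)$ directly from $\matrics{W}_X = (1-\lambda)\matrics{W}_T + \lambda\matrics{W}$, writes both pairs of quantities as affine combinations of the common pair $\bigl(cut_2(\cdot;G_T),\, cut_2(\cdot;G)\bigr)$, and likewise for volumes. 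From these identities one reads off the pointwise envelopes $cut_X(S;G) \leq cut_2(S;G_X) \leq 2\,cut_X(S;G)$ and $vol_X(S;G) \leq vol_2(S;G_X) \leq 2\,vol_X(S;G)$ on every vertex set $S$, and hence the two comparison inequalities
\begin{equation*}
\phi_X(S;G) \leq 2\,\phi_2(S;G_X), \qquad \phi_2(S;G_X) \leq 2\,\phi_X(S;G).
\end{equation*}

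Applying the first comparison at $T^*$ and the second at the mixed-order minimiser $S^* = \arg\min_S \phi_X(S;G)$, and then chaining with the Cheeger bound above, yields
\begin{equation*}
\phi_X(T^*;G) \leq 2\phi_2(T^*;G_X) \leq 4\sqrt{\phi_2^*(G_X)} \leq 4\sqrt{2\phi_X^*}.
\end{equation*}
The hard part will be recovering the sharper constant $2\sqrt{2}$ stated in the theorem: the chain above loses a factor of two at each of the two transfer steps, and each factor is individually tight in the worst case (saturated, for instance, when $\lambda\in(0,1)$ is bounded away from the endpoints and the optimising cut carries almost no triangles). To close this gap I would try to avoid the composition of the two transfer inequalities, either via a direct Rayleigh-quotient analysis of $\matrics{L}_X$ that bounds $\lambda_2(\matrics{L}_X)$ in terms of $\phi_X^*$ without passing through $\phi_2^*(G_X)$, or via a joint sweep-cut argument exploiting the fact that the two comparisons cannot simultaneously saturate on the same partition.
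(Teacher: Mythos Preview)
Your approach is exactly the paper's: prove the two-sided comparison
\[
\tfrac{1}{2}\,\phi_2(S;G_X)\;\leq\;\phi_X(S;G)\;\leq\;2\,\phi_2(S;G_X)
\]
via Lemma~\ref{lemma:bensor} and the linearity of $\matrics{W}_X$, then chain it with the classical Cheeger inequality (Lemma~\ref{lem:secondcheeger}) applied to $G_X$. Your derivation of the comparison inequalities is correct and matches the paper's line by line.

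Regarding the constant: your chain yielding $4\sqrt{2\phi_X^*}$ is the correct conclusion of this argument, and you should not spend effort trying to sharpen it to $2\sqrt{2\phi_X^*}$. The paper's displayed chain contains an arithmetic slip. Immediately after invoking Lemma~\ref{lem:secondcheeger} to obtain $\phi_2(T^*;G_X)\leq 2\sqrt{\psi^*}$, the paper writes
\[
\phi_X(T^*;G)\;\leq\;2\,\phi_2(T^*;G_X)\;\leq\;2\sqrt{\psi^*},
\]
which drops a factor of $2$; the right-hand side should read $4\sqrt{\psi^*}$. Propagating this through the remaining steps gives precisely your $4\sqrt{2\phi_X^*}$. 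The two transfer inequalities you derived are individually tight (each saturated at one endpoint of the convex combination of $cut_2(\cdot;G_T)$ and $cut_2(\cdot;G)$), so no refinement of the comparison step alone will recover the missing factor; the stated constant $2\sqrt{2}$ is simply not what the paper's own argument proves.
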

\begin{proof}
It suffices for us to prove that for any set $S$, 
\begin{eqnarray}
\frac{1}{2} \phi_2(S;{G_{X}})\leq \phi_{X}(S;G) \leq 2 \phi_2(S;G_X). \label{eqn:mix_conductance}
\end{eqnarray}

Assume for now that the inequality~(\ref{eqn:mix_conductance}) holds. By Lemma~\ref{lem:secondcheeger}, the set $T^*$ satisfies  $\phi_2(T^*;{G_{X}})\leq 2\sqrt{\psi^*}$, where $\psi^*=\min_{S\subseteq V}\phi_2(S;{G_{X}})$. Let $R$ be the set with $\phi_{X}(R;G)=\phi_X^*=\min_{S\subseteq V}\phi_{X}(S;G)$. Then by inequality~(\ref{eqn:mix_conductance}), we have 
\begin{eqnarray*}
\phi_{X}(T^*;G)&\leq& 2\phi_2(T^*;G_X)\leq 2\sqrt{\psi^*}  
\leq 2\sqrt{\phi_2(R;G_X)}\\
&\leq& 2\sqrt{2\phi_{X}(R;G)}=2\sqrt{2\phi_X^*}.
\end{eqnarray*}
This will then finish the proof. Therefore, we only need to prove the inequality~(\ref{eqn:mix_conductance}). We will make use of the Lemma 
\ref{lemma:bensor} from \cite{benson2016higher}.%

By Lemma~\ref{lemma:bensor}, we have
\begin{eqnarray*}
cut_{X} (S;G) 
&=& (1-\lambda)cut_3 (S;G) + \lambda cut_2 (S;G)\\
&=&(1-\lambda) \frac12 cut_2(S;G_T) + \lambda cut_2 (S;G),
\end{eqnarray*}

and
\begin{eqnarray*}
vol_{X}(S;G) 
&=& (1-\lambda)vol_3 (S;G)+\lambda vol_2 (S;G) \\
&=& (1-\lambda)\frac{1}{2} vol_2(S;G_T)+\lambda vol_2 (S;G).
\end{eqnarray*}

Since the adjacency matrix of $G_{X}$ is a linear combination of the adjacency matrix of $G_T$ and the adjacency matrix of $G$, i.e.,
$\matrics{W}_{X}=(1-\lambda)\matrics{W}_T+\lambda \matrics{W}$, we have  
\begin{eqnarray*}
cut_2(S;G_X)
&=&(1-\lambda)cut_2(S;G_T) +\lambda cut_2(S;G),\\
vol_2(S;G_X)
&=&(1-\lambda)vol_2(S;G_T) +\lambda vol_2(S;G).
\end{eqnarray*}

The above equations imply that for any set $S$, 
\begin{eqnarray*}
\frac12 cut_2(S;G_X)\leq cut_{X}(S;G) \leq cut_2(S;G_X),\\
\frac12 vol_2(S;G_X)\leq vol_{X}(S;G) \leq vol_2(S;G_X).
\end{eqnarray*}

The last inequality also implies that for any $S$, 
\begin{eqnarray*}
\frac12 vol_2(\bar{S};G_X)\leq vol_{X}(\bar{S};G) \leq vol_2(\bar{S};G_X).
\end{eqnarray*}

Therefore, by the definition of $\phi_{X}(S;G)$, we have 
\begin{eqnarray*}
\phi_{X}(S;G) &\leq& \frac{cut_2(S;G_X)}{\min(\frac{1}{2} vol_2(S;G_X), \frac{1}{2} vol_2(\bar{S};G_X))} \\
&=& 2 \phi_2(S;G_X),\\
\phi_{X}(S;G) &\geq& \frac{\frac{1}{2} cut_2(S;G_X)}{\min(vol_2(S;G_X), vol_2(\bar{S};G_X))} \\
&=& \frac12 \phi_2(S;G_X).
\end{eqnarray*}
This completes the proof of the inequality (\ref{eqn:mix_conductance}).
\end{proof}

\textbf{Complexity analysis.}
The computational time of MOSC-GL is dominated by the time to form $\matrics{W}_X$ and
compute the second eigenvector of $\matrics{L}_{X}$. The former requires finding all
triangles in the graph, which can be as large as $O(n^3)$ for a complete graph. While most
real networks are far from complete so the actual complexity is much lower than $O(n^3)$. For
the latter, it suffices to use power iteration to find an approximate eigenvector, with
each iteration at $\tilde{O}(g)$, where $g$ denotes the number of non-zero entries in
$\matrics{L}_{X}$.

\subsection{MOSC Based on Random Walks (MOSC-RW)}\label{MOSC-RW}
Alternatively, we can develop MOSC under the random walks framework. 
Edge/triangle conductance can be viewed as a probability corresponding to the Markov chain. For a set $S$ with edge volume at most half of the total graph edge volume, the edge conductance of $S$ is the probability that a random walk will leave $S$ conditioned upon being inside $S$, where the transition probabilities of the walk are defined by edge connections \cite{von2007tutorial}. Similarly, for a set $S$ with triangle volume at most half of the total graph triangle volume, the triangle conductance of $S$ is the probability that a random walk will leave $S$ conditioned upon being inside $S$, where the transition probabilities of the walk are defined by the triangle connections \cite{benson2015tensor}. This motives us to directly combine random walks from edge and triangle connections to perform MOSC. Therefore, we propose MOSC-RW to consider both edge and triangle structures via the respective probability transition matrix and tensor, under the random walks framework.

Specifically, starting with the third-order adjacency tensor $\tensor{T}$, we define a third-order transition tensor $\tensor{P}$ as Eq. (\ref{trans-tensor}).
Each entry of $\tensor{P}$ represents the transition probability of a random walk such that the probability of jumping to a state $\scalars{j}$ depends on the last two states $\scalars{i}$ and $\scalars{k}$ \cite{xu2016representing}. In the case $\sum_{m=1}^n \tensor{T}(i, m, k) = $ 0, 
we set $\tensor{P} (i, j, k)$ with $0$.

Let $\matrics{T}_k\in\mathbb{ R}^{n\times n}$ denote the $k$th $n\times n$ block of $\tensor{P}$, i.e., %
\begin{equation}
\matrics{T}_k=\tensor{P}(:, :, k).
\end{equation}
Next, we average $\{\matrics{T}_k, k=1, ..., n\}$ to reduce $\tensor{P}$ to a similarity matrix $\matrics{A}$:
\begin{equation}\label{MOSCave}
\matrics{A}  = \frac{1}{n}\sum_{k = 1} ^{n}\matrics{T}_k.
\end{equation}
Now recall that $\matrics{P}=\matrics{D}^{-1}\matrics{W}$ denotes the probability transition matrix of random walks on the input graph. We construct a mixed-order similarity
matrix $\matrics{H}$ by a weighted sum of 
$\matrics{A}$ and $\matrics{P}$ via a mixing parameter $\lambda \in [0, 1]$ as:
\begin{equation}\label{MOSCeqn}
\matrics{H} = (1 - \lambda) \matrics{A} + \lambda \matrics{P}.
\end{equation}
Thus, we obtain the MOSC-RW algorithm with standard SC steps on $\matrics{H}$, as summarised in Algorithm \ref{alg:MOSC-RW}. 

When $\lambda=1$, MOSC-RW is equivalent to conventional SC by Shi and Meila \cite{shi2000normalized} and considers only second-order structures.
MOSC-RW with $\lambda=0$ considers only third-order structures, which is a simplified (unweighted) version of tensor SC (TSC) by Benson \etal \cite{benson2015tensor}, so we name it as simplified TSC (STSC). In the intermediate case, $\lambda$ controls the trade-off. 

\textbf{Interpretation.} Now we interpret the model (Eq. (\ref{MOSCeqn})) as a mixed-order random walk process. At every step, the random walker chooses either a first-order (with probability $\lambda$) or a second-order (with probability ($1- \lambda$)) random walk. For the first-order random walk, the walker jumps from the current node $i$ to a neighbour $j$ with probability $\matrics{P}(i,j)=\frac{1}{\matrics{D}(i,i)}$. For the second-order random walk in $\matrics{A}$, $\matrics{A}(i,j)$ is the probability of the following random process: supposing the walker is at vertex $i$, it first samples a vertex $k$ with probability $\frac{1}{n}$, then in the case that some neighbour $k$ of $i$ is sampled and $i,j,k$ forms a triangle, the walker jumps from $i$ to $j$ with probability $1/\matrics{W}_T(i,k)$, where $\matrics{W}_T(i,k)$ is the number of triangles containing both $i$ and $k$.

\begin{algorithm}[t]
	\caption{MOSC-RW}
	\begin{algorithmic} [1]\label{alg:MOSC-RW}
		\REQUIRE {$\graset{G}$ = ($\graset{V}$, $\graset{E}$)}, a mixing parameter $\lambda$
		\ENSURE Two node sets $\{S, \bar{S}\}$
		\STATE Construct the adjacency matrix $\matrics{W}\in 
		\mathbb{ R}^{n\times n}$.
		\STATE Construct the adjacency tensor $\tensor{T} \in 
		\mathbb{ R}^{n\times n\times n}$. 		 
		\FOR{$1 \leqslant i,j,k\leqslant \emph{n}$} 
		\IF{$\sum_{m=1} ^n\tensor{T}(i, m, k) \neq 0$} 
		    \STATE	$\tensor{P} (i, j, k) =$ $\tensor{T}(i, j, k)$ $/\sum_{m=1}^n$$\tensor{T}(i, m, k)$.
		\ELSE 
		    \STATE $\tensor{P} (i, j, k) = $ 0.
		\ENDIF
		 
		\ENDFOR
		\STATE $\matrics{T}_k$ $\gets$ $\tensor{P}(:, :, k)$ for $k=1,\cdots,n$.
		\STATE Compute the reduced similarity matrix $\matrics{A}$ via (\ref{MOSCave}). %
		\STATE Let $\matrics{D}$ be diagonal with $\matrics{D}_{ii}$ = $\sum_i^n \matrics{W}(i,j)$. 
		\STATE $\matrics{P}$ = $\matrics{D}$$^{-1}$$\matrics{W}$.
		\STATE $\matrics{H}$ =  $(1 - \lambda)\matrics{A} + \lambda \matrics{P}$.
		\STATE Compute the second largest eigenvector $\vectors{v}$ of $\matrics{H}$.
		\STATE $\vectors{v}  \gets$ Sorting entries of $\vectors{v}$.
		\STATE  \{$\grasubset{S}$, $\bar{\grasubset{S}}$\} $\gets$ Sweep cut on $\vectors{v}$ w.r.t. some cut criteria. 
	\end{algorithmic}
\end{algorithm}

\textbf{Complexity analysis.} 
The running time of MOSC-RW is again dominated by the time of finding all the triangles and the approximate eigenvector, and thus asymptotically the same as the running time of MOSC-GL. However, since MOSC-RW involves tensor construction, normalisation and averaging, it is more complex than MOSC-GL in implementation.  %

\subsection{Multiple Clusters and Higher-order Cheeger Inequalities of MOSC}
To cluster a network into $k > 2$ clusters based on 
mixed-order structures, MOSC-GL and MOSC-RW follow the conventional SC \cite{von2007tutorial}. Specifically, MOSC-GL treats the first $k$ row-normalised  eigenvectors of $\matrics{L}_X$ 
as the embedding of nodes that can be clustered by $k$-means. Similarly, MOSC-RW uses the first $k$ eigenvectors of $\matrics{H}$ as the node embedding to perform $k$-means.

Regarding performance guarantee, following \cite{benson2016higher} and \cite{lee2014multiway},
MOSC-GL and MOSC-RW do not have performance guarantee with respect to higher-order
Cheeger inequalities. However, by replacing $k$-means with a different clustering algorithm, MOSC-GL can derive a theoretical performance guarantee~\cite{lee2014multiway}.

\subsection{Automatic Determination of $\lambda$}\label{autoLambda}
The mixing parameter $\lambda$ is the only hyperparameter in MOSC. 
To improve the usability, we design schemes to automatically determine its optimal value
$\lambda^*$ from a set $\Lambda$ based on the quality of output
clusters~\cite{leskovec2010empirical,chakraborty2017metrics,yang2015defining}. For
bi-partitioning networks, the cut criterion 
 used to obtain output clusters can help to determine the best $\lambda^*$ from $\Lambda$. For multiple partitioning networks, 
we can use the sum of triangle densities of the individual cluster to determine the best $\lambda^*$ from $\Lambda$. 

Specifically, for each $\lambda'\in \Lambda$, let \{$\grasubset{S}_{\lambda'}$,~$\overline{\grasubset{S}_{\lambda'}}$\} denote the MOSC bi-partitioning clusters obtained with $\lambda=\lambda'$. 
For a specific minimisation or maximisation cut criterion $\tau$ (e.g., edge conductance $\phi_2$), we choose  $\lambda$ to be  the one that optimises $\tau$, i.e.,  
\begin{equation}
\lambda^* =  \arg \ \min _{\lambda' \in \Lambda} \tau(\grasubset{S}_{\lambda'}) \:\: \mathrm{or} \:\: \lambda^* =  \arg \ \max _{\lambda' \in \Lambda} \tau(\grasubset{S}_{\lambda'}),
\end{equation}
respectively. 

For the case of multiple partitions, we propose a triangle-density-based scheme to determine $\lambda$ as follows:
\begin{equation}
\lambda^* = \arg \ \max _{\lambda' \in \Lambda} \ \sum_{c = 1}^k \frac{\sum_{\gravex{v_i,v_j,v_k  \in \grasubset{S}_c(\lambda')}} \tensor{T}(i, j, k)}{6|\grasubset{S}_c(\lambda')|},
\end{equation}
where $\grasubset{S}_c(\lambda')$ denotes the $c$-th cluster resulted from $\lambda'$, 
and the
factor 1/6 is used to avoid repeated count of triangles in an undirected graph.

\subsection{Structure-Aware Error Metrics}\label{secENode} 
If we have ground-truth clusters available, we can use them to measure  performance of clustering algorithms. Existing works commonly use mis-clustered nodes \cite{ghoshdastidar2017uniform} or related metrics (e.,g. NMI)~\cite{ana2003robust}. We denote the ground-truth partition of $\graset{G}$ with $k$ clusters as  $\mathbb{S}^* = \{S_1^*, S_2^*,\dots, S_k^*\}$ and a candidate partition to be evaluated as $\mathbb{S} = \{S_1, S_2,\dots, S_k\}$. %
The mis-clustered node metric is defined as 
\begin{eqnarray}
\epsilon_{N}(\mathbb{S}^*, \mathbb{S}) = 
\underset{\sigma}{\min} \sum_{c=1}^k |  \grasubset{S}_c^* \oplus \grasubset{S}_{\sigma(c)}|, \label{eqn:eps_N}
\end{eqnarray}
which measures the difference between two partitions $\mathbb{S}^*$ and $\mathbb{S}$, where $\sigma$ indicates all possible permutations of \{1, 2, $\dots$, $k$\} and $\oplus$ denotes the symmetric difference between the two corresponding sets. A smaller $\epsilon_{N}$ indicates a more accurate partition.

A limitation of the above metric is that it fails to truly reflect the errors made in preserving structures such as edges  or triangles. Our  studies show that mis-clustered nodes do not have a \textit{monotonic} relationship with mis-clustered edges or triangles. That is, a smaller number of mis-clusterd nodes does not imply smaller number of mis-clustered edges or triangles, and vice versa. This motivates us to propose two new error metrics $\epsilon_{E}$ and $\epsilon_{T}$ that measure the \textit{mis-clustered edges} and \textit{triangles}, respectively. These new metrics can provide more insights in the preservation of edges and triangles. 

Specifically, we define $\epsilon_{E}$ as %
\begin{equation}
\epsilon_{E}(\mathbb{S}^*,\mathbb{S})  = \sum_{c=1}^k E_N  (\grasubset{S}^*_c) - 
\underset{\sigma}\max \sum_{c=1}^k  E_N  (\grasubset{S}^*_c \cap \grasubset{S}_{\sigma(c)})
\label{ES},
\end{equation}
where $E_N(S)$ is the number of edges in $S$. We can define $\epsilon_{T}$ similarly by replacing $E_N(S)$ in Eq. (\ref{ES}) with $T_N(S)$, where $T_N(S)$ is the number of triangles in $S$.

\subsection{Sweep Cut Error Lower Bound}\label{secOcut} Supposing we have the ground-truth partition $\mathbb{S}^*$ and the \textit{sorted} eigenvector $\vectors{v}$, we can examine the best possible partition, named as the \textit{optimal cut} (Ocut). This Ocut can be obtained by performing sweep cut using an error metric ($\epsilon_N$, $\epsilon_E$ or $\epsilon_T$) computed from  partitions as the cut criteria, e.g., for $\epsilon_N$, its Ocut is 
\begin{eqnarray}
Ocut_N(\mathbb{S}^*)= \arg\min_{\grasubset{T}_u: 1\leq \scalars{u} \leq \scalars{n}-1} \ \epsilon_N (\mathbb{S}^*, \{ \grasubset{T}_u, \overline{\grasubset{T}_u} \}),
\label{Ocut}
\end{eqnarray}
where $T_u$ is the first $u$ nodes from the sorted $\vectors{v}$.
Replacing $\epsilon_N$ with $\epsilon_E$ or $\epsilon_T$ gives $Ocut_E(\mathbb{S}^*)$ or $Ocut_T(\mathbb{S}^*)$. It is important to note that Ocut does not depend on any cut criterion.

Ocut can help us understand the potential and model quality of a particular SC algorithm. The computed Ocut can serve as its lower bound, indicating how far a partition result (without looking at the ground-truth) is from the best possible solution (looking at the ground-truth given the fixed sorted node order), or equivalently, how much potential improvement is possible. %

\section{Experiments} 
This section aims to evaluate MOSC against existing SC methods. In addition, we will examine the effect of cut criteria and hyperparameter $\lambda$, and gain insights from the newly designed error metrics and optimal cut. 

\subsection{Experimental Settings}

\begin{table*}[!t]
	\caption {Statistics of the 2,005 networks. The number in parentheses is the median for each range.}\label{statistics}
	\centering
	\begin{tabular}{lcc|cccccc}
		\toprule 
		\textbf{Network} & \textbf{$|\graset{V}|$} & \textbf{$|\graset{E}|$}&
		  {\textbf{Size}} &{\textbf{Triangle density}}&{\textbf{\#Interaction edges}}&{\textbf{\#Clusters/network}}&{\textbf{\#Network(s)}} \\ 
		\midrule
		DBLP & 317K&1.05M& 14$\sim$303 (22)& 7.4$\sim$167.9 (15.4)&1$\sim$278 (15)&2&500 \\
		YouTube & 1.13M&2.99M& 6$\sim$389 (91)& 1$\sim$22.9 (3.73)&1 $\sim$1054 (89)&2&500\\
		Orkut& 3.07M&117M& 88$\sim$379 (206) & 213.7$\sim$1526 (452.6) &37$\sim$10470 (2411) &2&500\\
		LJ & 4.00M&34.7M& 33$\sim$193 (98) & 116.3$\sim$2968 (422.4)&1$\sim$9179 (1489)&2 &500\\
		\hline
		Zachary & 34&78&34 & 1.32& 11&2&1 \\
		Dolphin & 62&159& 62&1.53& 6 &2&1\\
		Polbooks & 105&441&105 & 5.33& 70 &3&1\\
		Football & 115&613& 115& 7.04& 219 &12&1\\
		PBlogs & 1490&16716& 1490& 67.8& 1576&2 &1\\
		\bottomrule
	\end{tabular}
\end{table*}

\textbf{Datasets.} 
The experiments were conducted on two popular groups of networks
with very different triangle densities: 
1) five full real-world networks: Zachary's karate club
 (Zachary)~\cite{zachary1977information}, Dolphin social
network (Dolphin)~\cite{lusseau2003emergent}, American college football (Football)~\cite{newman2006modularity}, U.S. politics books (Polbooks)~\cite{newman2006modularity} and Political blogs (PBlogs)~\cite{adamic2005political};
2) four complex real-world networks: DBLP, YouTube, Orkut, and LiveJournal (LJ) from
the Stanford Network Analysis Platform
(SNAP) \cite{yang2015defining}.\footnote{\url{https://snap.stanford.edu/data/index.html}}
 All networks have ground-truth communities available.
For the four 
SNAP networks, we extract paired communities to focus on bi-partitioning problems with the following procedures: 
\begin{enumerate}
	\item For each network, we select communities with the top 500 highest triangle densities, among those communities having no more than 200 nodes (for DBLP, YouTube, and Orkut) or 100 nodes (for LJ) because it has high density;
	\item For every community in the top list, we choose another community having the most connections with it, among all the other communities in the respective network (without limiting the community node size). These two communities form a bi-partitioning network.
\end{enumerate}
In this way, we extracted 2,000 networks from SNAP. The statistics of networks are summarised in Table \ref{statistics}. 

\textbf{Compared algorithms.} 
We evaluate MOSC-GL and MOSC-RW against the following six state-of-the-art methods, including both edge-based SC and triangle-based SC, and both 
global and local methods.
\begin{enumerate}
\item{SC-Shi \cite{shi2000normalized}:} 
Shi and Malik developed a method aiming to minimise Ncut$_2$ criterion via a generalised eigenvalue problem of~Eq.~(\ref{trans_matrix}). 

\item{SC-Ng \cite{ng2002spectral}:} Ng \etal  designed a method built upon~\cite{shi2000normalized}. Instead of using one dominant eigenvector,
it used the first $k$ eigenvectors of $\matrics{L}$ for performing $k$ partitions and then an additional row normalisation step before $k$-means.

\item{Tensor Spectral Clustering (TSC) \cite{benson2015tensor}}:  TSC
is a higher-order spectral clustering method developed by Benson \etal They constructed a transition tensor $\tensor{P}$ as in Eq. (\ref{trans-tensor}) and used an expensive multilinear PageRank algorithm~\cite{gleich2015multilinear} to produce a vector as the weight for reducing the tensor to a matrix via weighted average, followed by conventional SC. 

\item{Higher-order SVD (HOSVD) \cite{ghoshdastidar2014consistency}}: To address the hypergraph clustering problem, 
this method used an adjacency tensor $\tensor{T}$ to encode hyperedge, which is equivalent to the adjacency tensor definition in Eq. (\ref{adj-tensor}). $\tensor{T}$ is then reduced to a matrix via computing a modelwise covariance matrix, followed by conventional SC.

\item{Motif-based SC (MSC)  \cite{benson2016higher} / Tensor Trace Maximisation (TTM) \cite{ghoshdastidar2017uniform}}: 
MSC is a general higher-order spectral clustering method via re-weighting edges according to the number of motifs containing corresponding edges, followed by conventional SC. TTM is independently proposed but equivalent to MSC, which we have verified both analytically and experimentally.

\item{HOSPLOC}\cite{zhou2017local}: 
This is a higher-order local clustering method aiming for more efficient processing while taking higher-order network structures into account. 
\end{enumerate}

\begin{figure*} [t]
	\centering
	\subfigure[$\epsilon_N$ on YouTube]{\includegraphics[width=58mm]{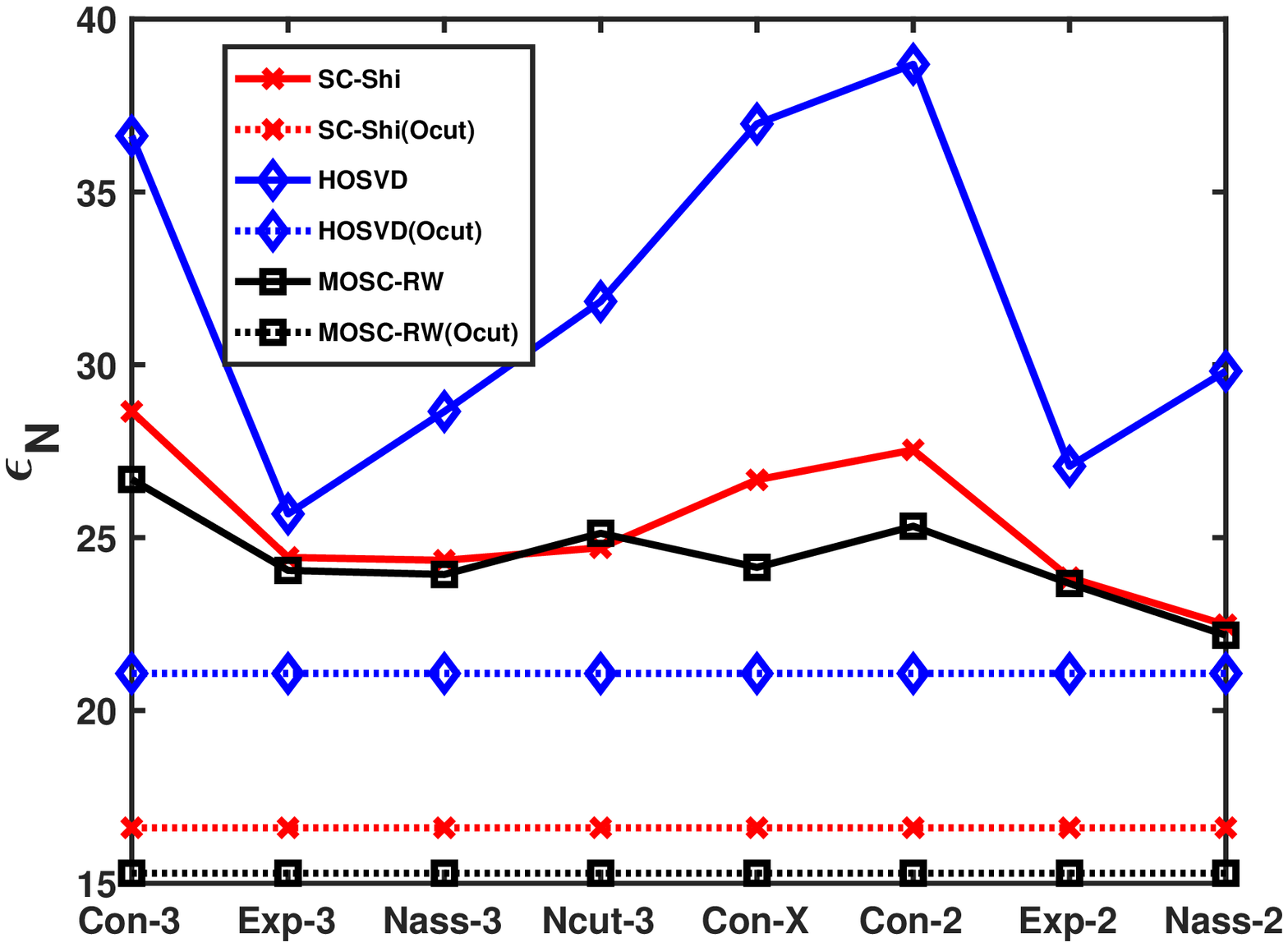}\label{cutfig:subfig1}}
	\subfigure[$\epsilon_T$ on YouTube]{\includegraphics[width=58mm]{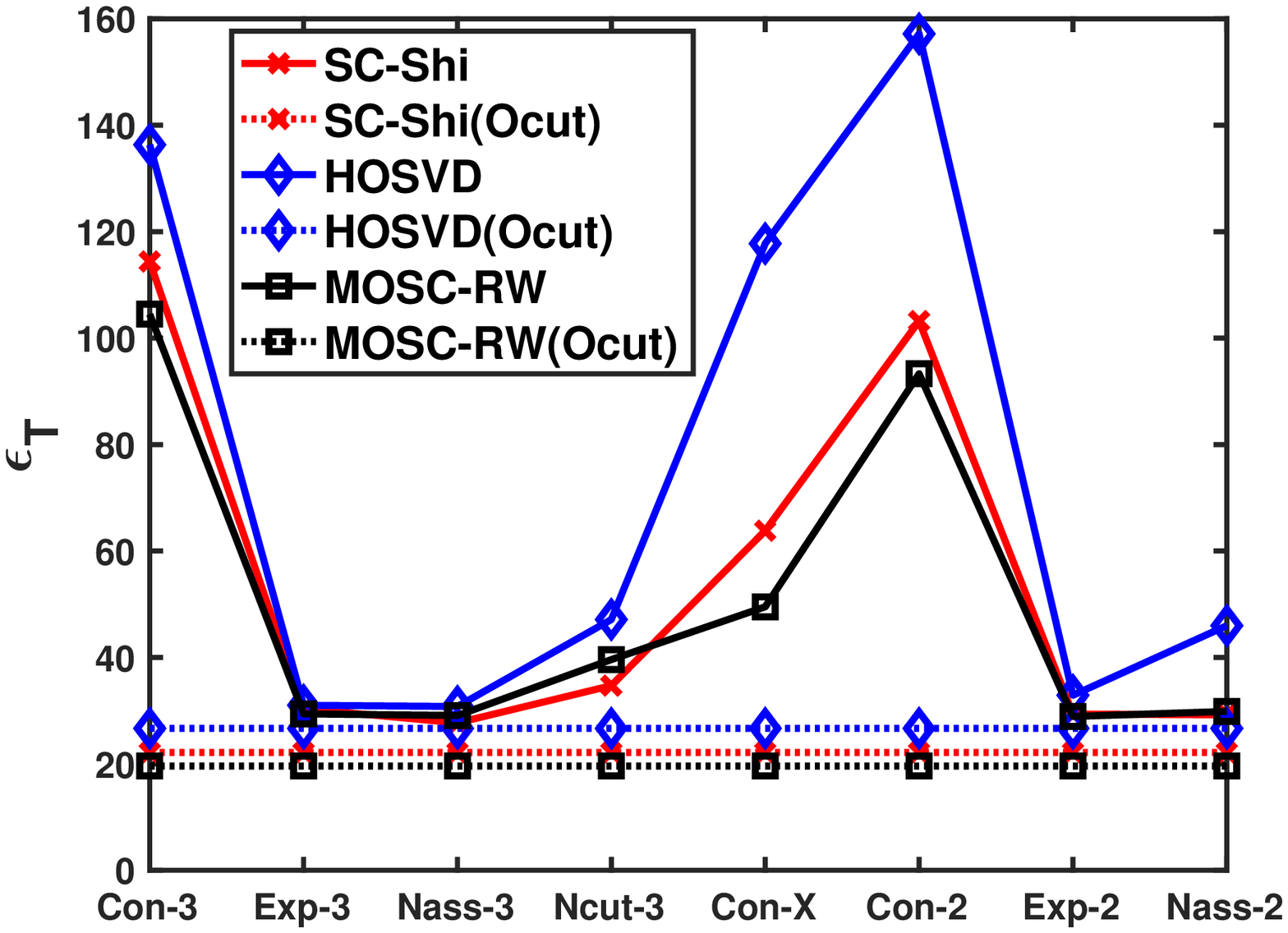}\label{cutfig:subfig3}}
	\subfigure[Non-monotonic relationship on YouTube]{\includegraphics[width=52mm]{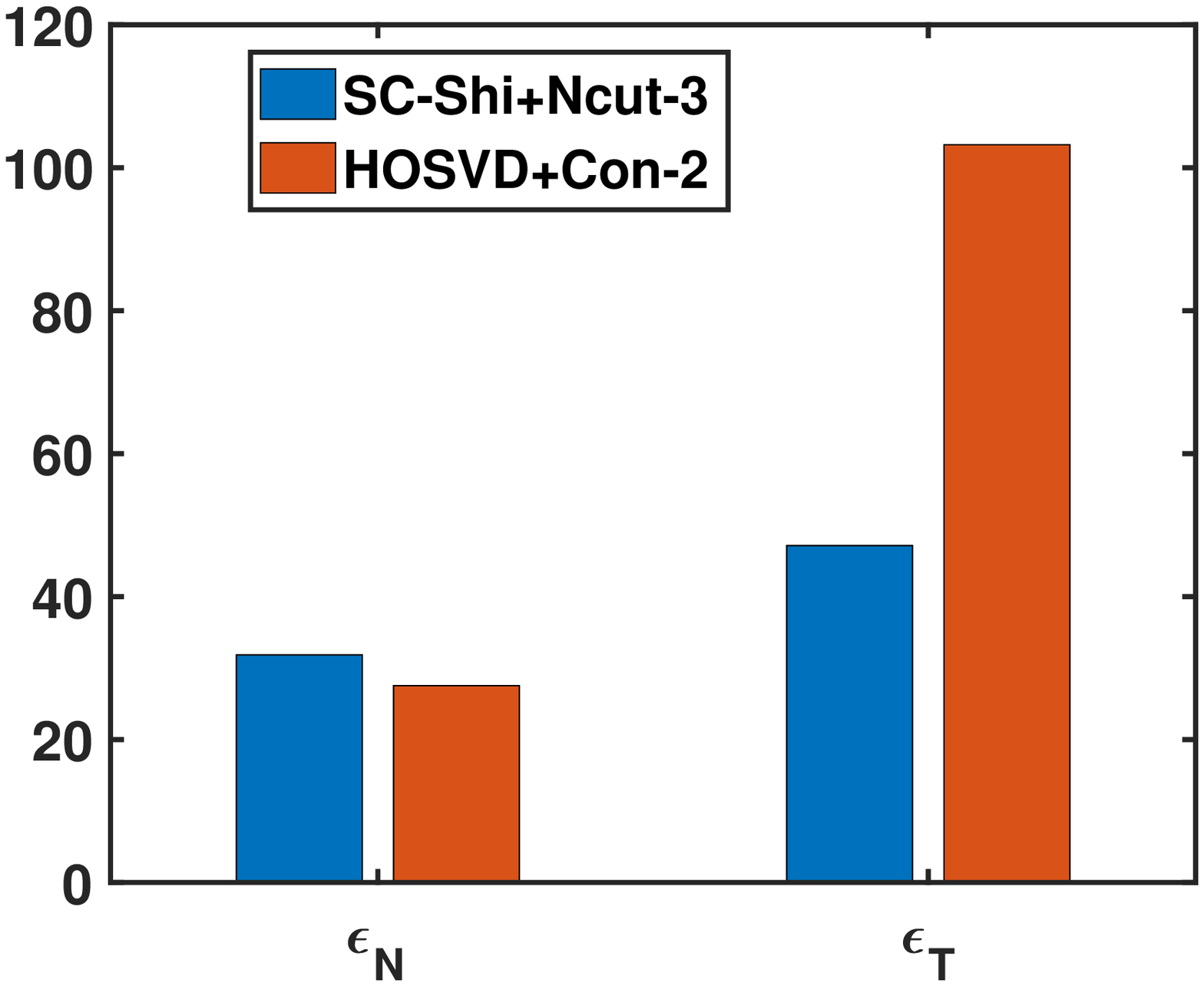}\label{cutfig:subfig4}}
	\caption{Clustering performance of SC-Shi, HOSVD, and MOSC-RW($\lambda=0.5$) in terms of mis-clustered nodes ($\epsilon_{N}$) and triangles ($\epsilon_{T}$) for eight cut criteria: Conduct$_3$(Con-3), Exp$_3$, Nassoc$_3$(Nass-3), Ncut$_3$, Conduct-X(Con-X),
	Conduct$_2$(Con-2), Exp$_2$, Nassoc$_2$(Nass-2). Their lower bounds are provided by the optimal cut (the horizontal dotted lines). We can observe the best criterion (Nass$_3$) for $\epsilon_T$ for all algorithms, non-monotonic relationship between HOSVD+Ncut$_3$ vs. SC-Shi+Con$_2$ on $\epsilon_{N}$ and $\epsilon_{T}$ as shown in \ref{multi_cut}\subref{cutfig:subfig4}, and the superiority of Ocut for MOSC-RW.}\label{multi_cut} 
\end{figure*}

We study three versions for each MOSC:
\begin{enumerate}
\item{MOSC ($\lambda=0.5$): MOSC with a fixed (recommended) $\lambda$ value of $0.5$;}
\item{MOSC (Auto-$\lambda$): MOSC with automatically determined $\lambda$;}
\item{MOSC (O-$\lambda$): This version of MOSC  varies $\lambda$ (with step 0.1) to find the optimal $\lambda$ (O-$\lambda$) that gives the smallest clustering error (using the ground truth). It provides a reference of the best possible results, showing the potential of the MOSC model.}
\end{enumerate}
Additionally, for MOSC-RW, we study simplified TSC (STSC) when $\lambda=0$.

\textbf{Evaluation metrics.} We use the proposed structure-aware metrics, mis-clustered edges ($\epsilon_{E}$) and triangles
($\epsilon_{T}$). We also use two popular metrics, mis-clustered nodes (Eq. (\ref{eqn:eps_N}))
and normalised mutual information (NMI) \cite{ana2003robust,chang2017community}. For the SNAP networks, we show the average results of the 500 bi-partitioning networks.

To define NMI,
we need the Shannon entropy for $\mathbb{S}$ that can be defined as $H(\mathbb{S}) = -\sum^k_{c=1}(n_{S_c}/n)\log(n_{S_c}/n)$, where $n_{S_c}$ is the number of vertices in community $S_c$. The mutual information between $\mathbb{S}$ and $\mathbb{S}^*$ can be expressed as
\begin{equation}
I(\mathbb{S},\mathbb{S}^*) = \sum_{c = 1}^k\sum_{d=1}^k \frac{n_{S_cS^*_d}}{n}\log(\frac{n_{S_cS^*_d}/n}{(n_{S_c}/n)
\times(n_{S^*_d}/n)}),
\end{equation}
where $n_{S_cS^*_d}$ is the number of vertices shared by communities 
$S_c$ and $S_d^*$. The NMI between two partitions 
$\mathbb{S}$ and $\mathbb{S}^*$ is defined as 
\begin{equation}
\text{NMI}(\mathbb{S},\mathbb{S}^*) = \frac{2I(\mathbb{S},\mathbb{S}^*)}{H(\mathbb{S})+H(\mathbb{S}^*)}.
\end{equation}
If $\mathbb{S}$ and $\mathbb{S}^*$ are identical, $\text{NMI}(\mathbb{S},\mathbb{S}^*)$ = 1. If $\mathbb{S}$ and $\mathbb{S}^*$ are independent, $\text{NMI}(\mathbb{S},\mathbb{S}^*)$ = 0.

\textbf{Reproducibility.} We implemented compared algorithms using Matlab code released by the authors of  MSC,\footnote{\url{https://github.com/arbenson/higher-order-organization-matlab}} HOSVD,\footnote{\url{http://sml.csa.iisc.ernet.in/SML/code/Feb16TensorTraceMax.zip}} HOSPLOC,\footnote{\url{http://www.public.asu.edu/~dzhou23/Code/HOSPLOC.zip}} and TSC via multilinear PageRank.\footnote{\url{https://github.com/dgleich/mlpagerank}} We followed guidance from the original papers to set their hyperparameters. All experiments
were performed on a Linux machine with one 2.4GHz Intel Core and 16G memory.
 We have released the Matlab code for MOSC.\footnote{\url{https://bitbucket.org/Yan_Sheffield/mosc/}} %

\begin{figure} [!t]
\centering
\subfigure[NMI on DBLP]{\includegraphics[width=28mm]{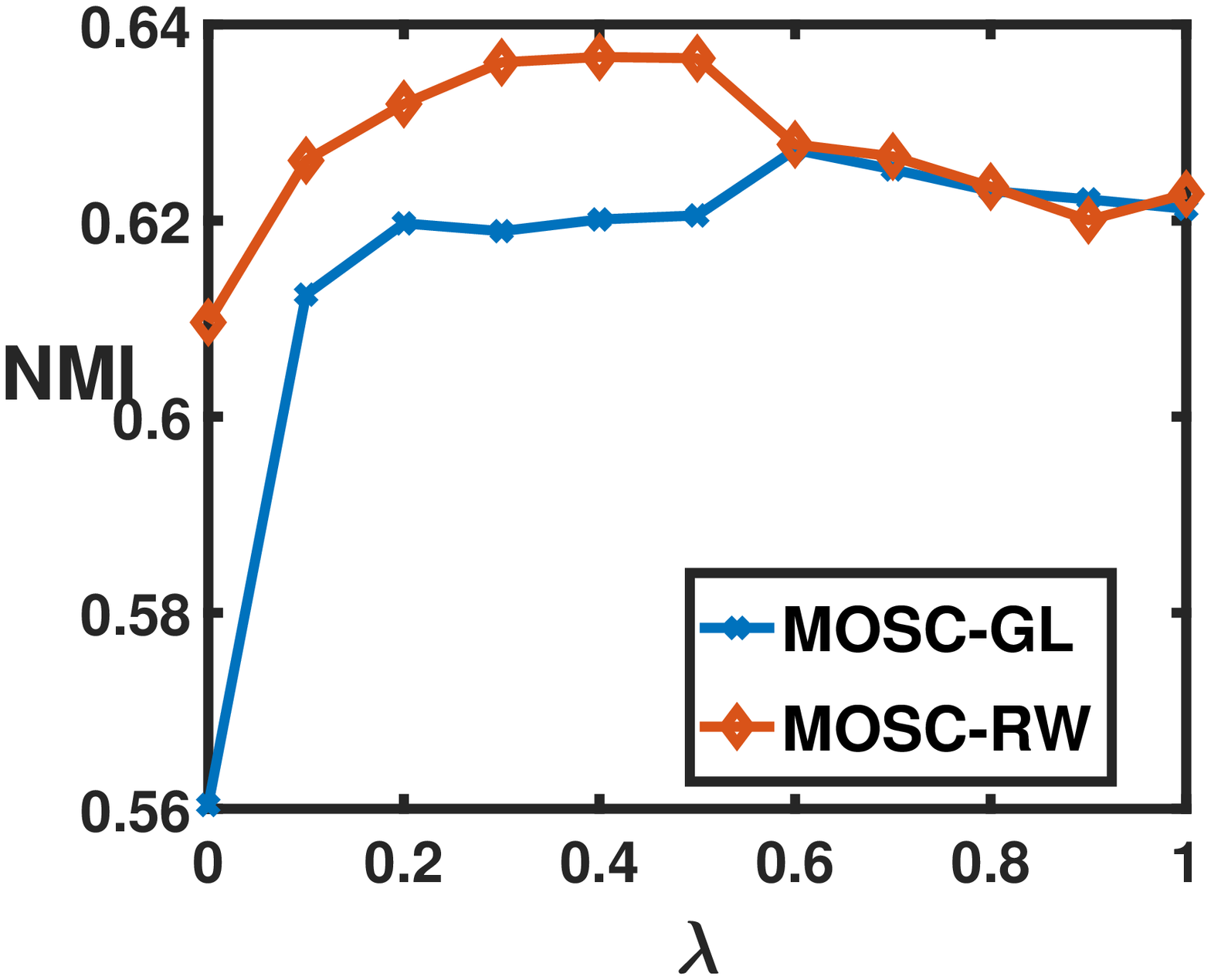}\label{senfig:subfig2}}
\subfigure[$\epsilon_N$ on DBLP]{\includegraphics[width=28mm]{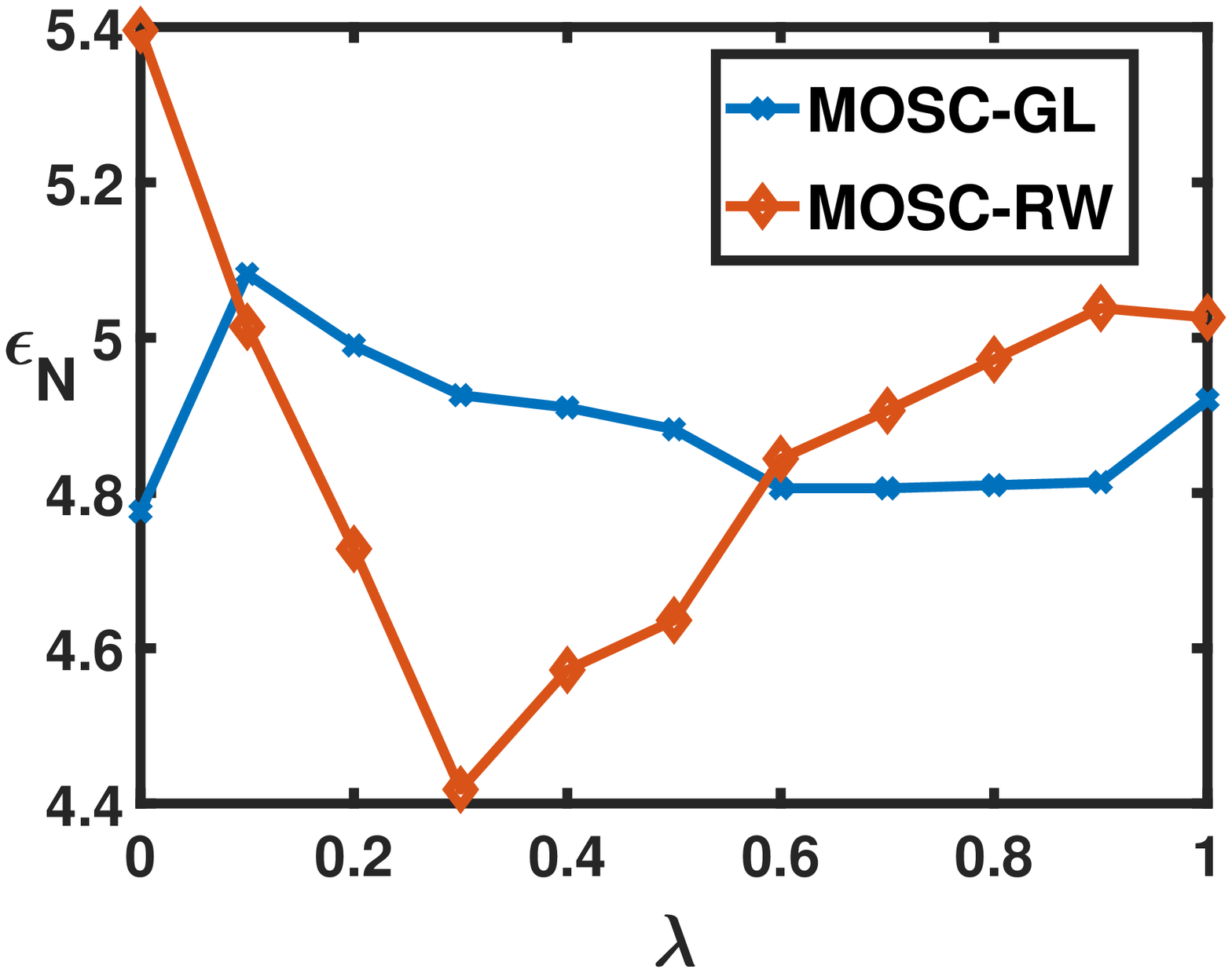}\label{senfig:subfig1}}
\subfigure[$\epsilon_T$ on DBLP]{\includegraphics[width=28mm]{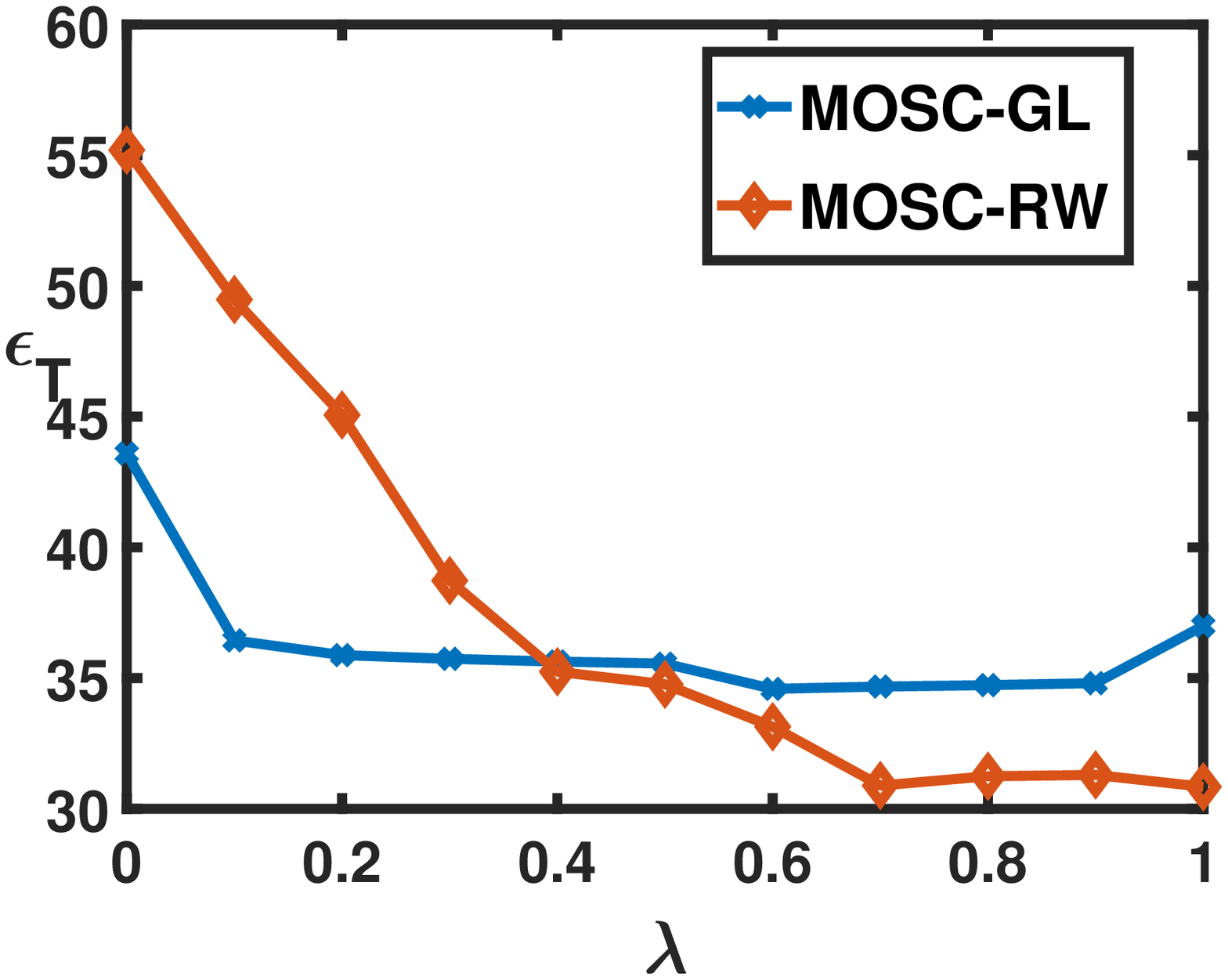}\label{senfig:subfig3}}
\subfigure[NMI on Orkut]{\includegraphics[width=28mm]{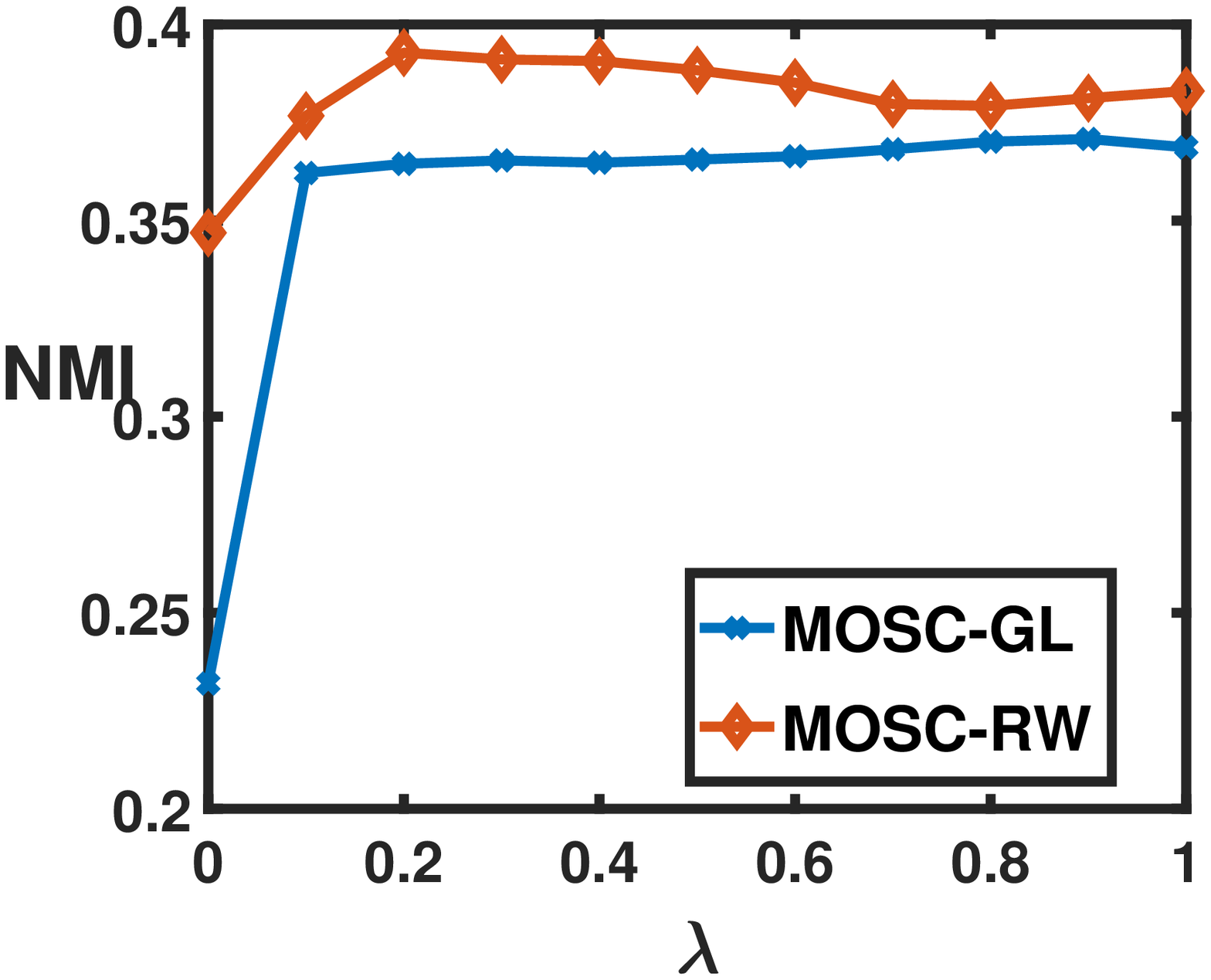}\label{senfig:subfig5}}
\subfigure[$\epsilon_N$ on Orkut]{\includegraphics[width=28mm]{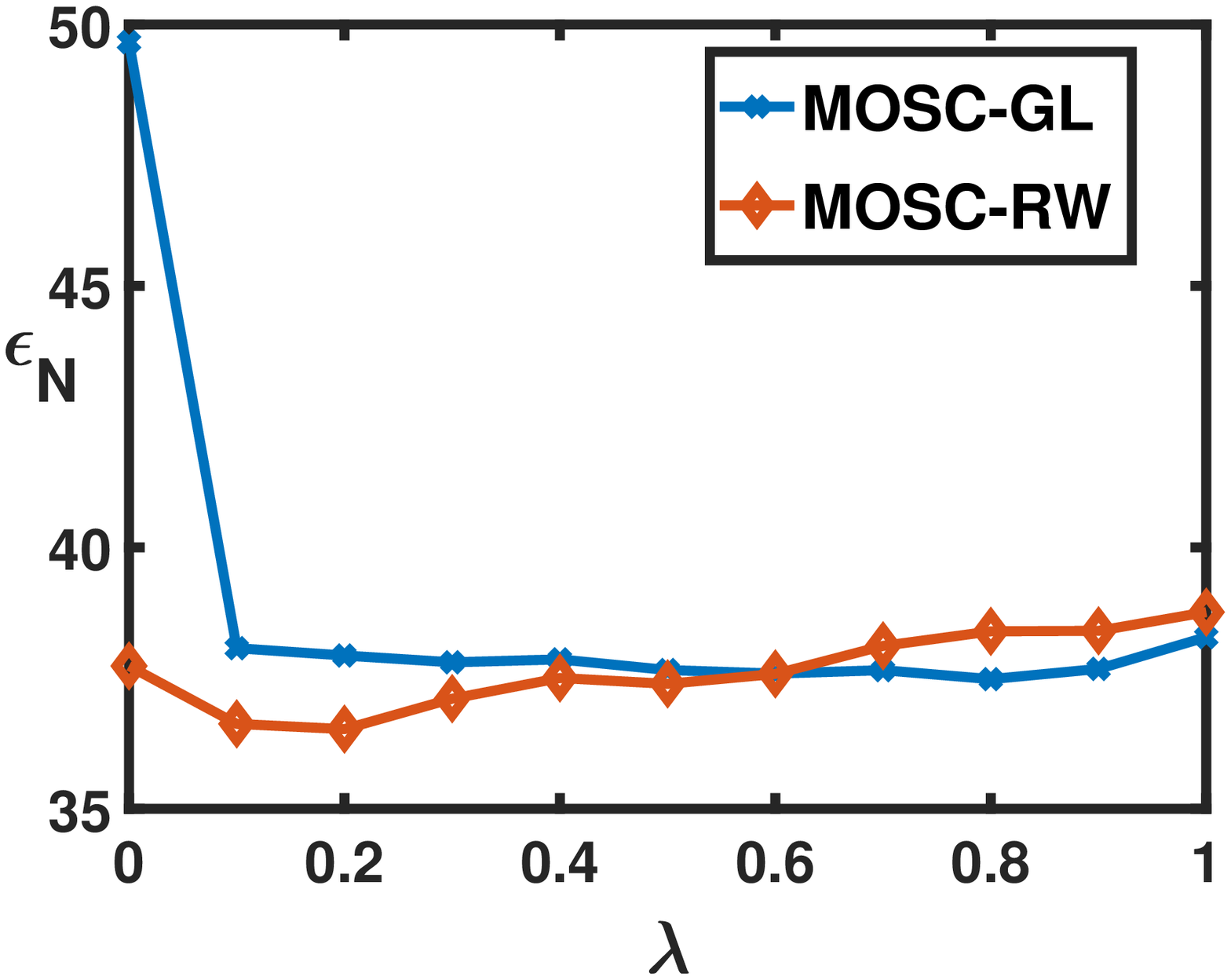}\label{senfig:subfig4}}
\subfigure[$\epsilon_T$ on Orkut]{\includegraphics[width=28mm]{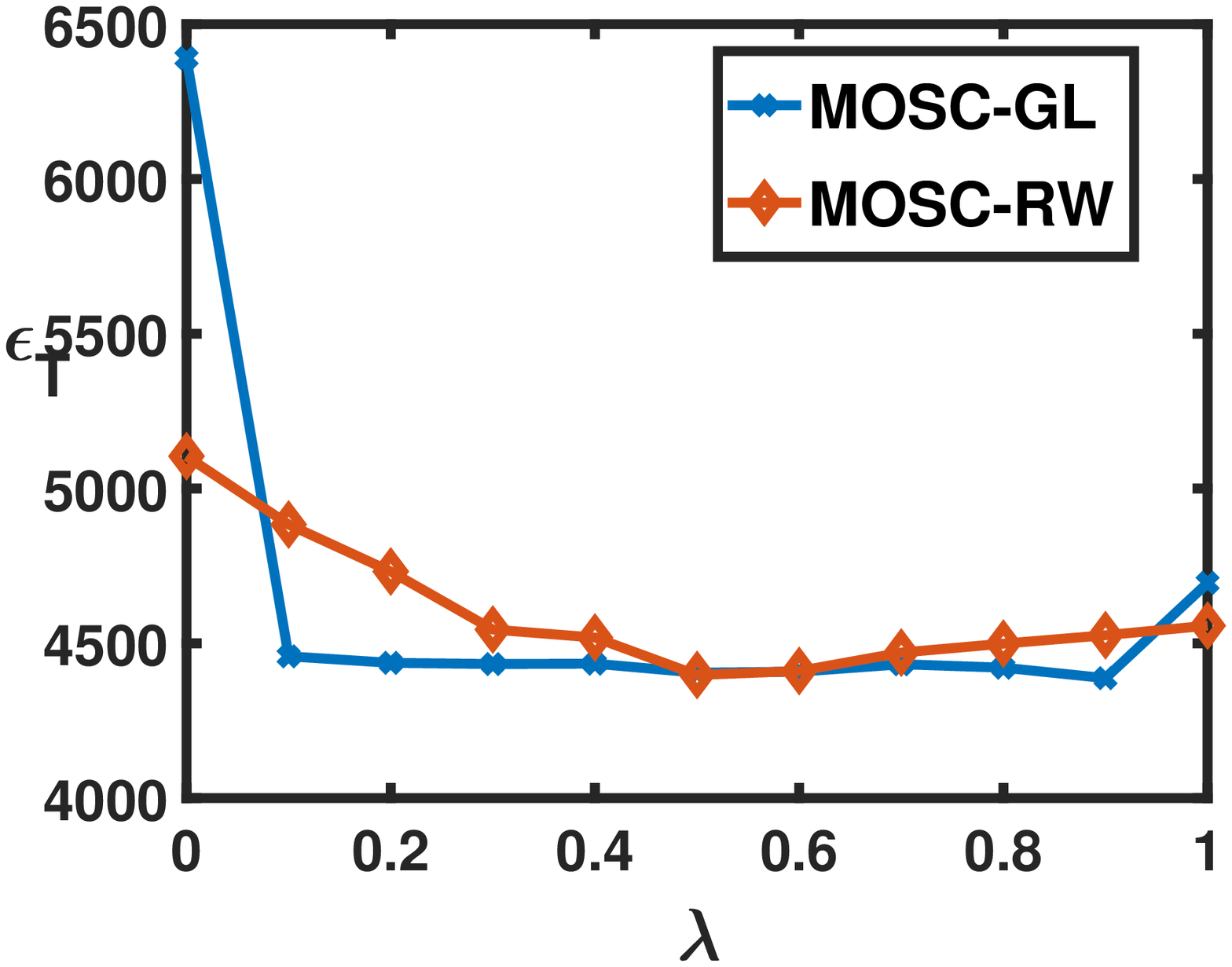}\label{senfig:subfig6}}
\subfigure[NMI on Football]{\includegraphics[width=28mm]{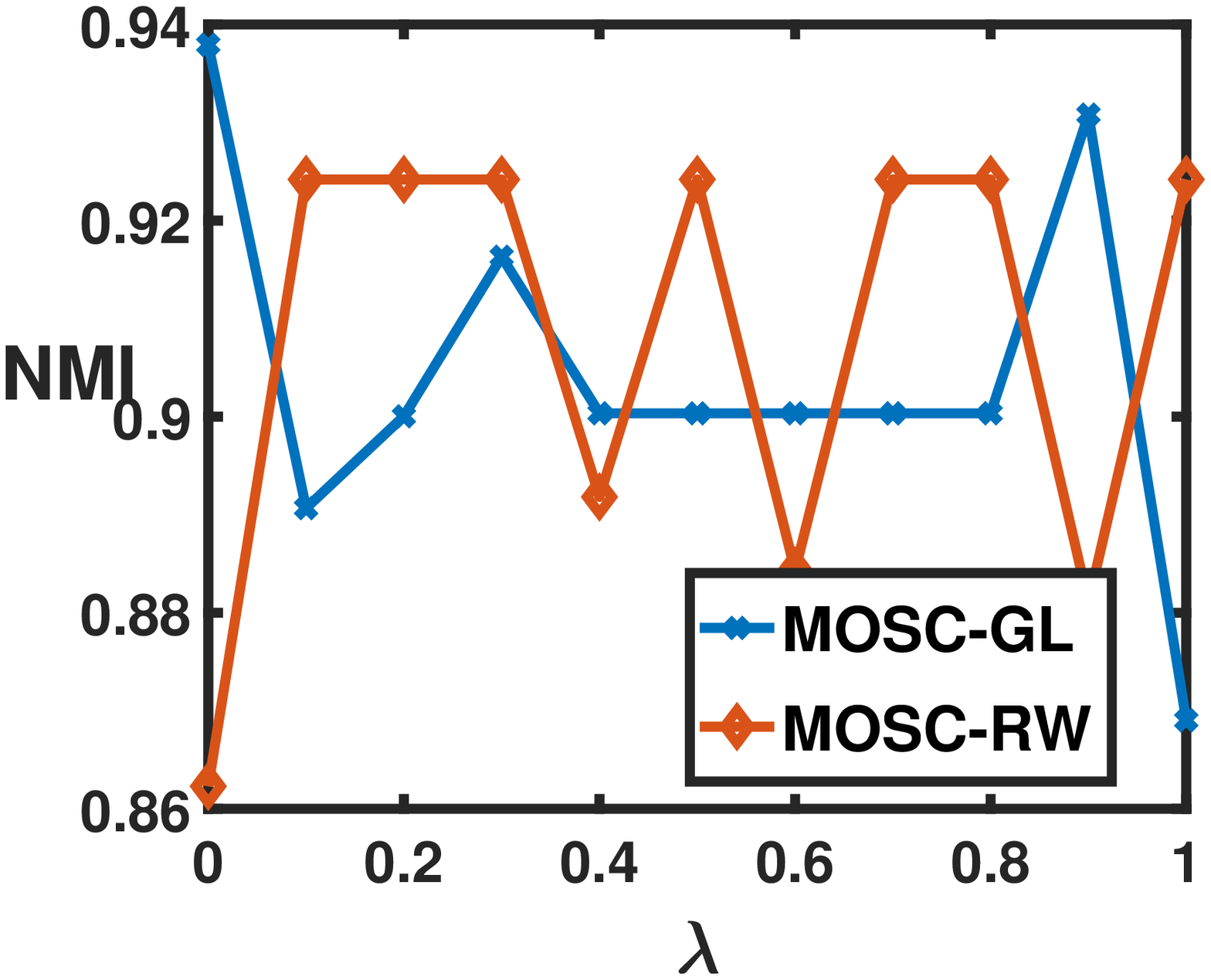}\label{senfig:subfig8}}
\subfigure[$\epsilon_N$ on Football]{\includegraphics[width=28mm]{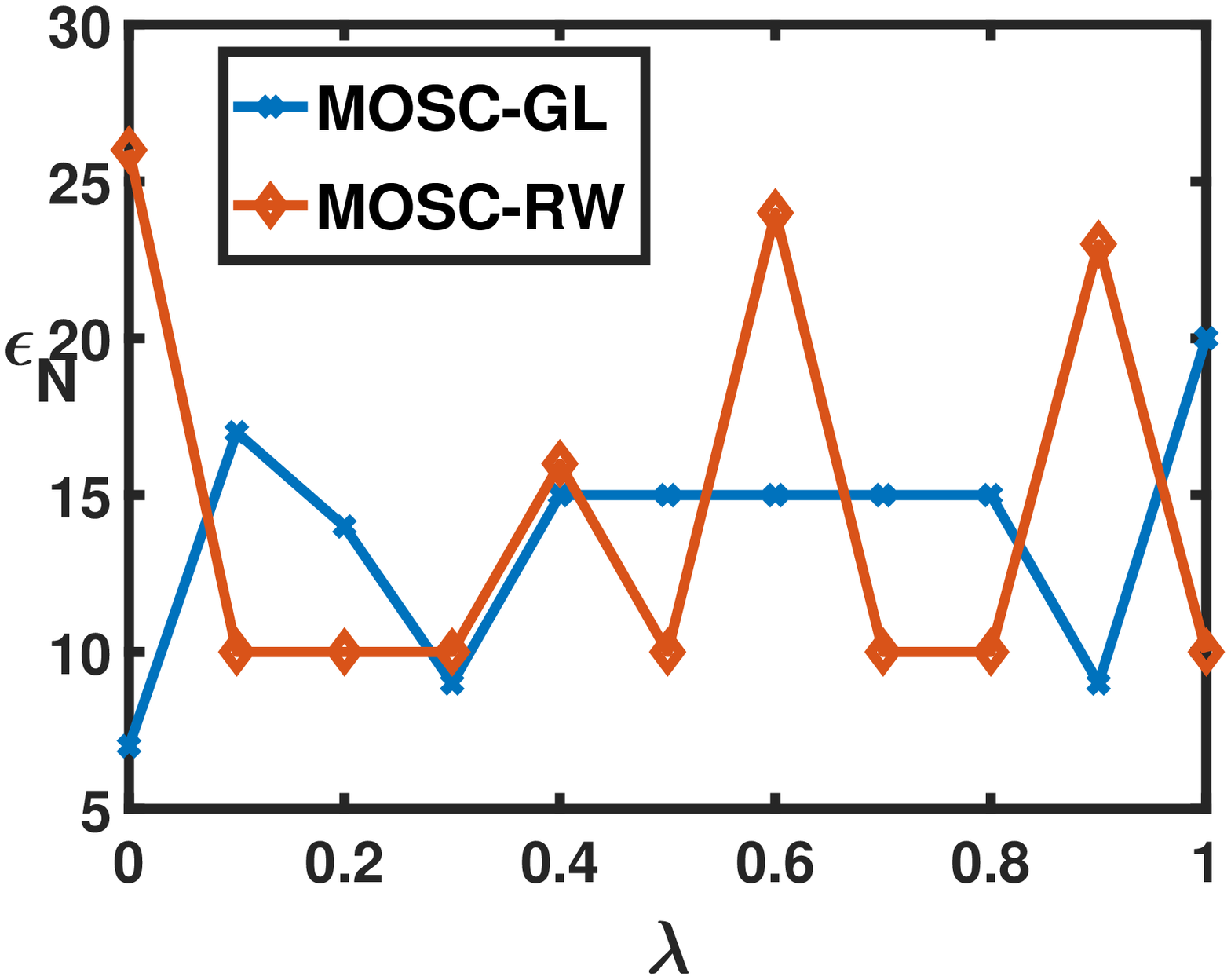}\label{senfig:subfig7}}
\subfigure[$\epsilon_T$ on Football]{\includegraphics[width=28mm]{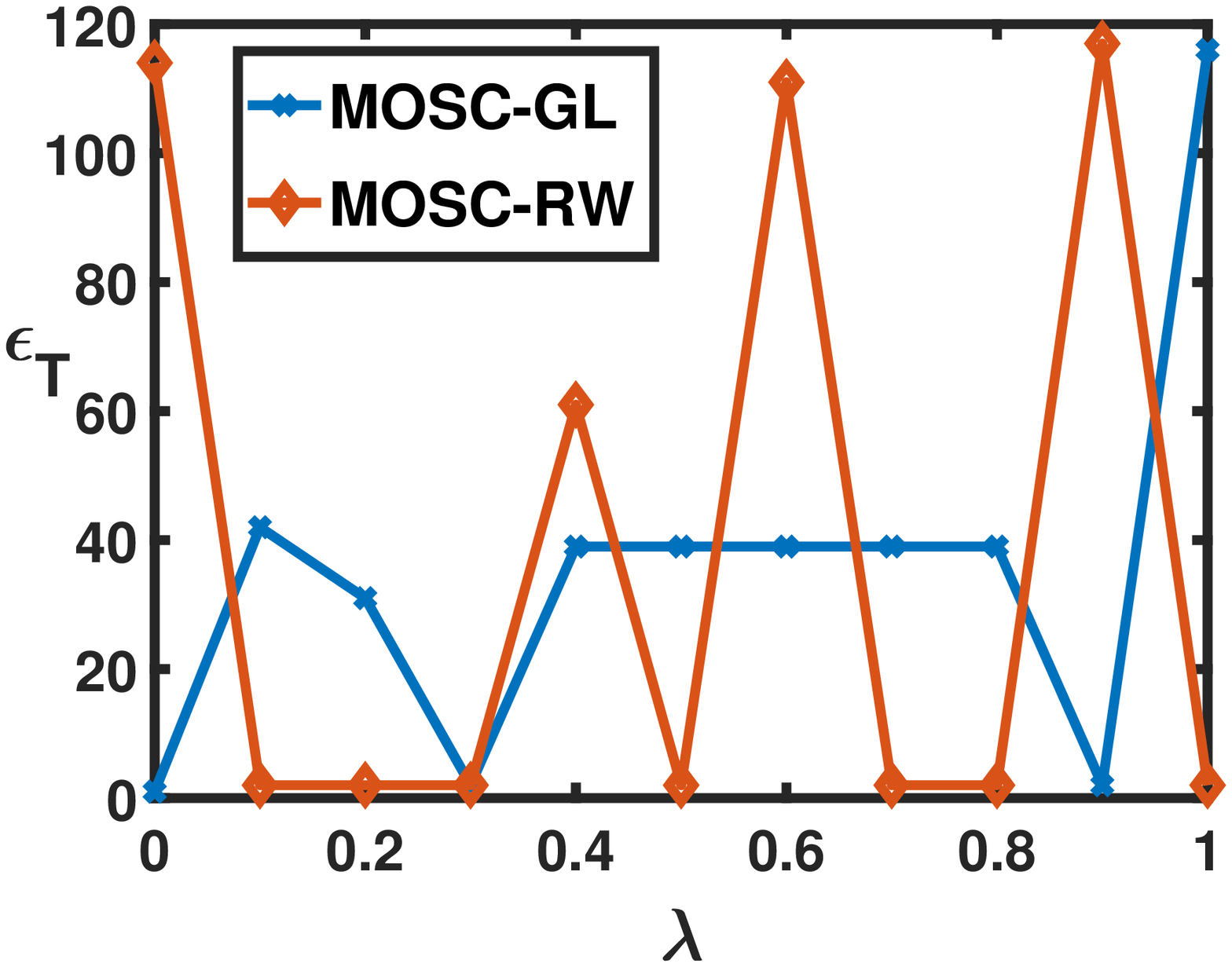}\label{senfig:subfig9}}
\caption{Sensitivity analysis of $\lambda$ on DBLP, Orkut and Football w.r.t NMI, $\epsilon_N$ and $\epsilon_T$.}\label{sen_multi_cut_1}
\end{figure}

\subsection{Effect of Cut Criteria} \label{cut_cmp}
Firstly, we study the effect of cut criteria on clustering performance. Note that Nassoc$_2$ ($\nassocTwo$) and Ncut$_2$ ($\ntwo$) are equivalent~\cite{shi2000normalized},  but Nassoc$_3$ ($\nassoc$) and Ncut$_3$ ($\nth$) are not (see Appendix). Thus, we have seven different cut criteria 
from Table \ref{table:cmp_cut} and the proposed mixed-order conductance ($\phi_X$) defined in Eq. (\ref{mix-conductance}). We study their effect on all algorithms while
only reporting representative results from SC-Shi, HOSVD and MOSC-RW ($\lambda=0.5$) on YouTube.

 Fig. \ref{multi_cut} shows the results for the two metrics $\epsilon_{N}$ and 
$\epsilon_{T}$, together with the Ocut values (the best possible cut, knowing the ground truth). We have the following observations:

\begin{enumerate}
\item Cut criteria have significant impact on the performance. Among all criteria, Nassoc$_3$ consistently gives the smallest $\epsilon_{T}$ for all algorithms on YouTube. 
\item The metric $\epsilon_{N}$ does not have a \textbf{monotonic} relationship with $\epsilon_{T}$.
We show an example on YouTube in Fig. \ref{multi_cut}\subref{cutfig:subfig4}
with respect to $\epsilon_{N}$ and $\epsilon_{T}$. As we can see, 
HOSVD+Ncut$_3$ vs. SC-Shi+Conduct$_2$ on $\epsilon_{N}$  has
opposite relationship on $\epsilon_{T}$.
\item The Ocut of MOSC-RW outperforms those of SC-Shi and HOSVD on both metrics $\epsilon_{N}$ and $\epsilon_{E}$ in YouTube, indicating the greater potential of MOSC-RW (even with a fixed $\lambda$). Regarding Ocut of MOSC-GL, we will present the results in Sec. \ref{sec:performance}.
\item Nassoc$_3$ and Ncut$_3$ are indeed not equivalent, as to be proven in Appendix.
\end{enumerate}

\renewcommand{\arraystretch}{1.5}
\begin{table*}[t]
	\caption{Performance of clustering algorithms with the best cut criteria. The best is in \textbf{bold} and the second best is \underline{underlined}, not considering \TOOLNAME(O-$\lambda$). A larger NMI indicates a better result, while a smaller $\epsilon_N$/$\epsilon_E$/$\epsilon_T$ indicates a better result. The corresponding best cut criteria are shown in parenthesis: Conduct$_3$ ($\phi_3$), Nassoc$_3$ ($\cncut_3$), Expan$_3$ ($\alpha_3$), Ncut$_3$ ($\nth$), Con-X ($\phi_X$),
		Conduct$_2$ ($\phi_2$), Expan$_2$ ($\alpha_2$), Nassoc$_2$ ($\cncut_2$), $k$-means(KM).}
	\resizebox{\textwidth}{!}{
	\Huge
		\centering \label{bestCmp}
		\scalebox{0.6}{
			\begin{tabular}{|c|c|cc|ccccc|ccc|ccc|}
				\hline
            	\multicolumn{2}{|c|}{} & \multicolumn{2}{c|}{Second order} & \multicolumn{5}{c|}{Third order} &\multicolumn{3}{c|}{MOSC-RW}&\multicolumn{3}{c|}{MOSC-GL}\\
				\hline
				\multicolumn{2}{|c|}{Method} & SC-Shi & SC-Ng & HOSVD & MSC & TSC &HOSPLOC & STSC &$\lambda$ = 0.5 & Auto-$\lambda$& O-$\lambda$&$\lambda$ = 0.5  & Auto-$\lambda$& O-$\lambda$\\
				\toprule 
				\multirow{3}{*}{\rotatebox{90}{DBLP}} 
				&  NMI &0.650 ($\phi_2$)	&\textbf{0.656} (KM)	&0.550 ($\xi_2$)	&0.620 ($\phi_2$)	&0.648 ($\phi_2$)	&0.286 ($\phi_3$)	&0.628 ($\phi_2$)	&\underline{0.654} ($\phi_2$)	&0.646 ($\phi_2$)	&0.795 (KM)	&0.648 ($\phi_2$)	&0.645 ($\phi_2$)	&0.711 (KM)\\ %
				& \EN &\textbf{4.13} ($\eta_3$)	&4.56 ($\eta_3$)	&5.40 ($\xi_3$)	&4.65 (KM)	&4.30 (KM)	&17.28 ($\phi_3$)	&\underline{4.24} (KM)	&4.43 ($\eta_3$)	&4.76 ($\phi_X$)	&1.76 (KM)	&4.82 (KM)	&4.87 ($\phi_2$)	&2.76 (KM)\\
				& \EE   &\textbf{13.36} ($\xi_3$)	&14.58 ($\xi_3$)	&17.99 ($\xi_3$)	&15.98 ($\xi_3$)	&18.72 ($\xi_3$)	&70.49 ($\phi_3$)	&18.72 ($\xi_3$)	&\underline{14.27} ($\xi_3$)	&15.68 ($\xi_3$)	&6.67 (KM)	&15.87 ($\xi_3$)	&16.67 ($\xi_2$)	&9.56 (KM)		\\
				& \ET &\textbf{24.81} ($\xi_3$)	&\underline{26.84} ($\xi_3$)	&32.18 ($\xi_3$)	&29.57 ($\xi_2$)	&37.93 ($\xi_3$)	&236.65 ($\phi_3$)	&45.46 ($\xi_3$)	&29.23 ($\xi_3$)	&28.46 ($\xi_3$)	&11.85 (KM)	&30.30 ($\xi_3$)	&32.01 ($\eta_3$)	&20.37 (KM)	\\
				\hline
				\multirow{3}{*}{\rotatebox{90}{YouTube}} 
				& NMI &0.248 ($\eta_3$)	&0.270 ($\eta_3$)	&0.124 ($\xi_2$)	&0.184 (KM)	&-	&-	&0.150 ($\xi_2$)	&\textbf{0.284} (KM)	&0.260 ($\phi_2$)	&0.386 (KM)	&\underline{0.275} (KM)	&0.263 ($\phi_2$)	&0.335 ($\eta_3$)\\ %
				& \EN  &\underline{22.48} ($\xi_2$)	&23.31 ($\xi_2$)	&25.69 ($\alpha_3$)	&24.41 ($\alpha_3$)	&-	&-	&23.83 ($\xi_3$)	&\textbf{22.18} ($\xi_2$)	&23.46 ($\xi_2$)	&16.38 (KM)	&23.44 ($\xi_2$)	&23.83 ($\xi_2$)	&18.45 ($\xi_3$)	\\
				& \EE&\underline{44.42} ($\xi_2$)	&46.46 ($\alpha_2$)	&50.61 ($\alpha_3$)	&47.18 ($\alpha_2$)	&-	&-	&63.12 ($\xi_3$)	&\textbf{44.28} ($\xi_2$)	&46.74 ($\alpha_2$)	&36.08 ($\xi_2$)	&52.58 ($\xi_2$)	&47.96 ($\alpha_2$)	&37.96 (KM)	\\
				& \ET   &\textbf{27.70} ($\xi_3$)	&29.49 ($\alpha_2$)	&30.78 ($\xi_3$)	&29.1 ($\alpha_2$)	&-	&-	&59.78 ($\xi_3$)	&\underline{28.90} ($\alpha_2$)	&29.29 ($\alpha_2$)	&22.15 ($\xi_3$)	&38.63 ($\xi_3$)	&29.46 ($\alpha_2$)	&22.95 (KM)	\\
				\hline
				\multirow{3}{*}{\rotatebox{90}{Orkut}} 
				&  NMI &\underline{0.397} ($\eta_3$)	&0.397 ($\eta_3$)	&0.3618 ($\xi_2$)	&0.390 ($\xi_2$)	&-	&-	&0.387 ($\xi_2$)	&\textbf{0.410} (KM)	&0.393 ($\xi_2$)	&0.472 (KM)	&0.397 (KM)	&0.394 ($\xi_2$)	&0.423 ($\eta_3$)	\\ %
				& \EN &37.13 ($\xi_2$)	&37.09 ($\xi_2$)	&40.43 ($\xi_2$)	&38.49 ($\eta_3$)	&-	&-	&37.60 ($\xi_2$)	&\textbf{36.05} (KM)	&37.02 ($\xi_2$)	&28.24 (KM)	&\underline{36.72} ($\xi_2$)	&36.93 ($\xi_2$)	&32.98 (KM)	\\
				& \EE &574.6 ($\xi_3$)	&574.6 ($\xi_3$)	&624.7 ($\xi_3$)	&582.3 ($\xi_3$)	&-	&-	&569.2 ($\xi_3$)	&\textbf{521.6} ($\alpha_2$)	&571.8 ($\xi_3$)	&410.3 ($\alpha_3$)	&\underline{550.4} ($\xi_3$)	&574.9 ($\xi_3$)	&497.7 ($\xi_3$)	\\
				& \ET  &4557 ($\xi_3$)	&4557 ($\xi_3$)	&4937 ($\xi_3$)	&4575 ($\xi_3$)	&-	&-	&5104 ($\xi_3$)	&\textbf{3949} ($\alpha_3$)	&4541 ($\xi_3$)	&2944 ($\alpha_3$)	&\underline{4405} ($\xi_3$)	&4614 ($\xi_3$)	&3816 ($\xi_3$)\\
				\hline
				\multirow{3}{*}{\rotatebox{90}{LJ}} 
				&  NMI &\underline{0.226} ($\xi_3$)	&0.224 ($\xi_3$)	&0.218 ($\eta_3$)	&0.224 ($\xi_3$)	&0.214 ($\xi_2$)	&-	&0.201 ($\eta_3$)	&\textbf{0.229} ($\xi_3$)	&0.221 ($\xi_3$)	&0.271 (KM)	&0.208 ($\xi_3$)	&0.212 ($\xi_3$)	&0.238 ($\xi_3$)	\\ %
								& \EN &5.58 ($\xi_3$)	&5.63 ($\xi_3$)	&5.79 ($\xi_3$)	&5.74 ($\xi_3$)	&5.52 ($\xi_3$)	&-	&\textbf{5.15} ($\xi_3$)	&\underline{5.49} ($\xi_3$)	&5.66 ($\xi_3$)	&4.43 ($\xi_3$)	&5.76 ($\xi_3$)	&5.64 ($\xi_3$)	&4.98 ($\xi_3$)	\\
				& \EE   &\underline{49.83} ($\xi_3$)	&50.01 ($\xi_3$)	&55.09 ($\xi_3$)	&52.54 ($\xi_3$)	&58.19 ($\xi_3$)	&-	&57.06 ($\xi_3$)	&\textbf{47.88} ($\xi_3$)	&51.01 ($\xi_3$)	&41.63 ($\xi_3$)	&58.64 ($\xi_3$)	&54.17 ($\xi_3$)	&46.51 ($\xi_3$)	\\
				& \ET  &\underline{546.1} ($\xi_3$)	&547.6 ($\xi_3$)	&600.6 ($\xi_3$)	&574.5 ($\xi_3$)	&737.7 ($\xi_3$)	&-	&773.0 ($\xi_3$)	&\textbf{530.6} ($\xi_3$)	&556.4 ($\xi_3$)	&460.2 ($\xi_3$)	&730.3 ($\xi_3$)	&617.8 ($\xi_3$)	&539.2 ($\xi_3$)	\\
				\bottomrule
	\end{tabular}}}
\end{table*}

\renewcommand{\arraystretch}{1}
\begin{table*}[h]
	\caption{Clustering performance of algorithms with the best cut criteria. The best is in \textbf{bold} and the second 
	best is \underline{underlined}, not considering \TOOLNAME(O-$\lambda$), which makes use of the ground truth. A larger NMI indicates a better result, while a smaller $\epsilon_N$/$\epsilon_E$/$\epsilon_T$ indicates a better result. Note that there are ties.}
	\resizebox{\textwidth}{!}{
		\centering \label{fullCmp}
		\scalebox{0.6}{
			\begin{tabular}{|c|c|cc|cccc|ccc|ccc|}
				\hline
            	\multicolumn{2}{|c|}{} & \multicolumn{2}{c|}{Second order} & \multicolumn{4}{c|}{Third order} &\multicolumn{3}{c|}{MOSC-RW}&\multicolumn{3}{c|}{MOSC-GL}\\
				\hline
				\multicolumn{2}{|c|}{Method} & SC-Shi & SC-Ng & HOSVD & MSC & TSC & STSC &$\lambda$ = 0.5 & Auto-$\lambda$& O-$\lambda$&$\lambda$ = 0.5  & Auto-$\lambda$& O-$\lambda$\\
				\hline
				\multirow{3}{*}{\rotatebox{90}{Zachary}} 

				& NMI &\textbf{0.837}	&\textbf{0.837}	&0.069	&\underline{0.732}	&0.677	&0.325	&\textbf{0.837}	&\textbf{0.837}	&0.837	&\textbf{0.837}	&\textbf{0.837}	&0.837\\ %
				& \EN  &\textbf{1}	&\textbf{1}	&14	&\underline{2}	&\underline{2}	&8	&\textbf{1}	&\textbf{1}	&1	&\textbf{1}	&\textbf{1}	&1	\\
				& \EE&\textbf{2}	&\textbf{2}	&34	&\underline{3}	&\underline{3}	&24	&\textbf{2}	&\textbf{2}	&2	&\textbf{2}	&\textbf{2}	&2	\\
				& \ET   &\textbf{1}	&\underline{1}	&\underline{16}	&\textbf{1}	&\textbf{1}	&14	&\textbf{1}	&\textbf{1}	&1	&\textbf{1}	&\textbf{1}	&1	\\
				\hline
				\multirow{3}{*}{\rotatebox{90}{Football}} 

				&  NMI &0.883	&0.904	&0.896	&0.924	&0.866		&0.862	&0.924	&\underline{0.924}	&0.924	&0.9	&\textbf{0.931}	&0.938	\\ %
				& \EN &23	&15	&16	&\underline{10}	&26		&26	&10	&10	&10	&15	&\textbf{9}	&7	\\
				& \EE &63	&37	&\underline{36}	&\textbf{7}	&70		&72	&\textbf{7}	&\textbf{7}	&7	&\underline{36}	&\textbf{7}	&3	\\
				& \ET  &99	&50	&\underline{39}	&\textbf{2}	&110		&114	&\textbf{2}	&\textbf{2}	&2	&\underline{39}	&\textbf{2}	&1\\
				\hline
				\multirow{3}{*}{\rotatebox{90}{Polbooks}} 

				&  NMI &0.575	&0.542	&0.092	&0.542	&0.180	&0.103	&\underline{0.575}	&0.575	&0.581	&0.563	&\textbf{0.589}	&0.589	\\ %
				& \EN &\textbf{17}	&\underline{18}	&56	&\underline{18}	&55		&51	&\textbf{17}	&\textbf{17}	&16	&\textbf{17}	&\textbf{17}	&17	\\
				& \EE  &\underline{27}	&33	&185	&34	&281	&172	&\underline{27}	&\underline{27}	&27	&28	&\textbf{21}	&21	\\
				& \ET   &\underline{7}	&10	&234	&8	&384	&227	&\underline{7}	&\underline{7}	&7	&\underline{7}	&\textbf{1}	&1	\\
				\hline
								\multirow{3}{*}{\rotatebox{90}{Dolphin}} 

				&  NMI &\underline{0.889}	&\underline{0.889}	&0.081	&0.536	&0.582		&0.631	&\underline{0.889}	&\underline{0.889}	&\underline{0.889}	&\underline{0.889}	&\textbf{1}	&1	\\ %
				& \EN &\underline{1}	&\underline{1}	&19	&7	&6		&5	&\underline{1}	&\underline{1}	&\underline{1}	&\underline{1}	&\textbf{0}	&0	\\
				& \EE  &\underline{1}	&\underline{1}	&43	&10	&8		&6	&\underline{1}	&\underline{1}	&\underline{1}	&1	&\textbf{0}	&0	\\
				& \ET   &\textbf{0}	&\textbf{0}	&29	&\textbf{0}	&\underline{1}		&\textbf{0}	&\textbf{0}	&0	&0	&\textbf{0}	&\textbf{0}	&0	\\
				\hline
								\multirow{3}{*}{\rotatebox{90}{PBlogs}} 

				&  NMI &0.007	&0.007	&0.014	&0.023	&-		&\underline{0.430}	&0.012	&\textbf{0.458}	&0.557	&0.098	&0.016	&0.098\\ %
				& \EN &671	&732	&677	&614	&-		&\textbf{204}	&659	&\underline{230}	&151	&478	&647	&478\\
				& \EE   &7302	&7302	&7307	&7260	&-		&\underline{362}	&7302	&\textbf{184}	&155	&7302	&7301	&7301		\\
				& \ET &36401	&36402	&36400	&36400	&-		&\underline{631}	&36401.5	&\textbf{456}	&410	&36402	&36400	&36400\\
				\hline
	\end{tabular}}}\vspace{-3mm}
\end{table*}

\subsection{Effect of $\lambda$} 
The mixing parameter $\lambda$ is the only hyperparamter in MOSC. 
To gain insight of MOSC, we conduct sensitivity analysis on $\lambda$ as shown in
Fig. \ref{sen_multi_cut_1} w.r.t. NMI, $\epsilon_N$ and $\epsilon_T$. We have the following
observations:

\begin{enumerate}
\item The choice of $\lambda$ can significantly affect the performance in all datasets. This was the motivation of developing schemes to  automatically determine the best $\lambda$.
\item Overall, MOSC-RW is more sensitive to $\lambda$ than MOSC-GL.
We will explain the reason at the end of Sec. \ref{sec:performance}.
\end{enumerate}

\subsection{Performance Comparison} \label{sec:performance}
We study the results of all algorithms in combination of all eight criteria and $k$-means (KM).
Figure \ref{multi_cut} shows that cut criteria can affect the performance of all algorithms. Therefore,
for fair comparison, we report the clustering results conducted by the  best criteria 
for each algorithm. The top two results are in bold (best) or underlined (second best), without considering the O-$\lambda$ version since it uses the ground truth. O-$\lambda$ is only used for gaining insights rather than direct performance comparison with existing methods.

\textbf{Results on SNAP Networks.} We show the performance of all 
clustering algorithms with the best cut criteria in terms of NMI, $\epsilon_N$, $\epsilon_E$, and $\epsilon_T$ on SNAP networks in Table \ref{bestCmp}. In addition, we report 
the best cut criteria.
The results for some settings of TSC and HOSPLOC are not available either due to long running time (not finished within 40 hours) or out of memory. In particular, the multilinear PageRank algorithm in TSC is very expensive. 

We have four observations:
\begin{enumerate}
\item  MOSC-RW ($\lambda = 0.5$) achieves the best in 10 out of 16 settings; 
\item  MOSC-RW outperforms MOSC-GL, although MOSC-GL achieves top two results in 4 settings.
\item  Both MOSC-RW($\lambda = 0.5$) and MOSC-GL($\lambda = 0.5$) have better results than
MOSC-RW(Auto-$\lambda$) and MOSC-GL(Auto-$\lambda$). This demonstrates that a fixed 
mixing parameter is effective, but it also shows the automatic schemes are not effective
in these settings.
\item  MOSC-RW(O-$\lambda$) has the highest NMI and lowest $\epsilon_N$, $\epsilon_E$, and $\epsilon_T$ in all 16 settings. This indicates potential future improvement by designing a better strategy to determine $\lambda$.
\item  Third-order cut criteria outperform second-order cut criteria on the whole.
In all 16 settings, third-order cut criteria achieve the best performance in 8 settings among which
Nassoc$_3$ is the best in 6 settings. Second-order cut criteria achieves the best in 5 settings and $k$-means for 3 settings. 

\end{enumerate}

\begin{figure*} [t]
	\centering 
	\subfigure[Dolphin]{\includegraphics[width=77mm]{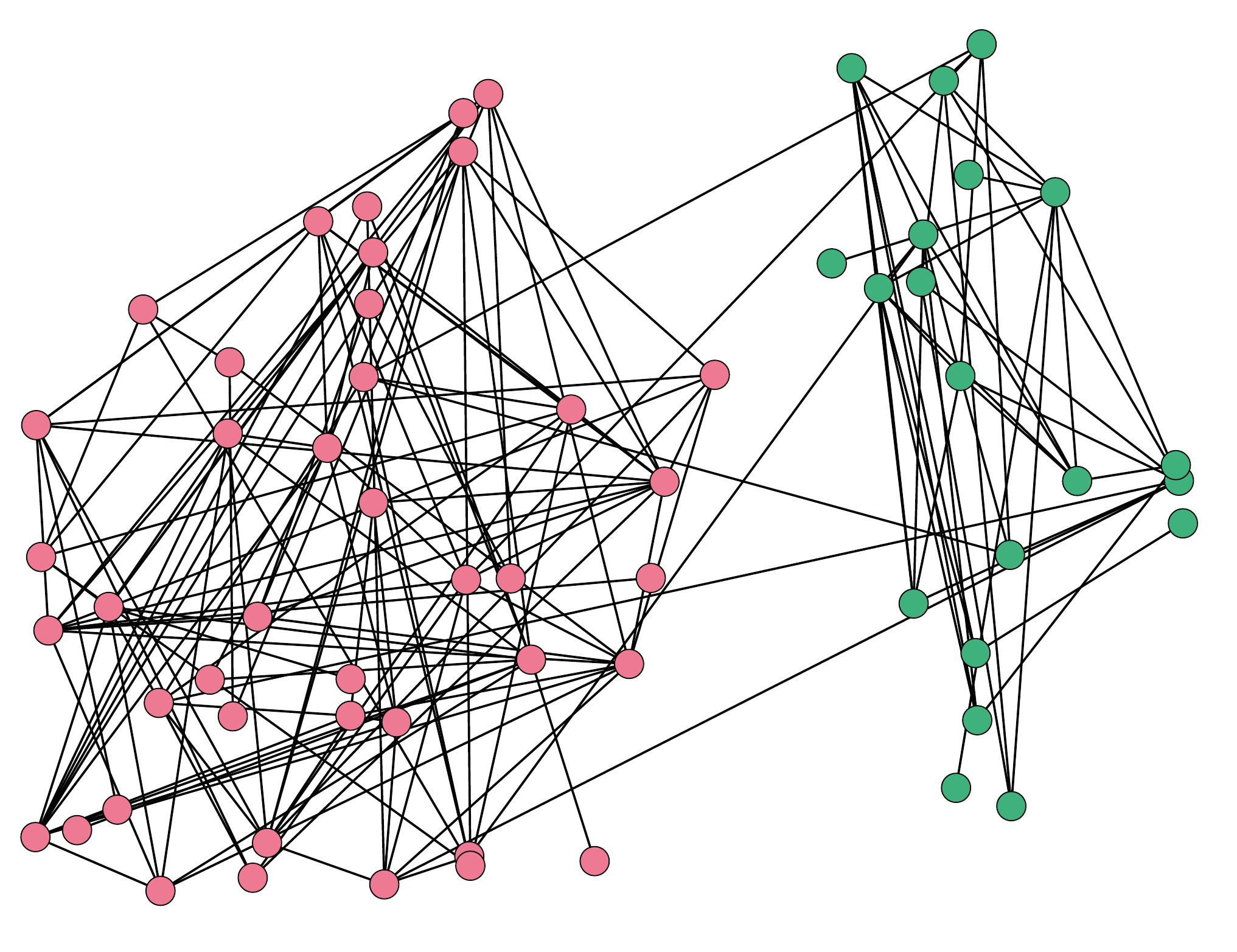}\label{dolphin:subfig1}}
	\subfigure[Football (12 clusters)]{\includegraphics[width=77mm]{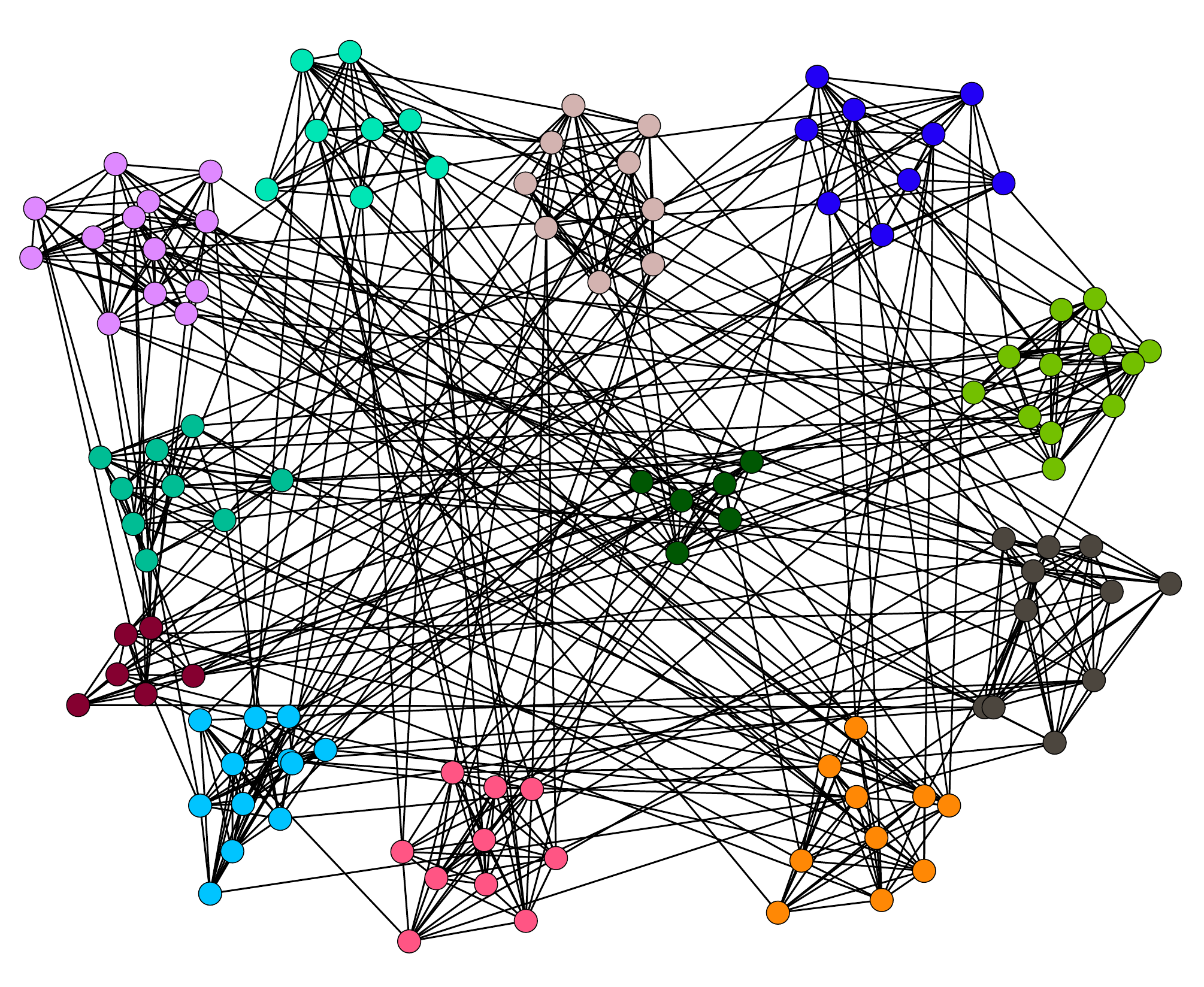}\label{football:subfig2}}
	\caption{Clusters in Dolphin and Football networks discovered by MOSC-GL (Auto-$\lambda$).}  \label{visualize}
\end{figure*}

\textbf{Results on Full Networks.}
We show the performance of all clustering algorithms with the best cut criteria for five full networks in terms of NMI, $\epsilon_N$, $\epsilon_E$, and $\epsilon_T$ in Table~\ref{fullCmp}, except HOSPLOC,
for which we were not able to obtain comparable results.
We have three observations:
\begin{enumerate}
\item MOSC-GL (Auto-$\lambda$) achieves the best performance in 16 out of 20 settings, demonstrating that automatic determination of $\lambda$ is effective in these settings. Specifically, for Dolphin,
MOSC-GL (Auto-$\lambda$) produces perfect results in all metrics. We visualise the obtained
clusters in Dolphin as shown in Fig. \ref{visualize}\subref{dolphin:subfig1}.
For networks with
multiple clusters (Polbooks, Football), MOSC-GL (Auto-$\lambda$) is also superior to others. We visualise Football with the 12 clusters in Fig. \ref{visualize}\subref{football:subfig2}.
\item MOSC-GL (Auto-$\lambda$) outperforms MOSC-RW (Auto-$\lambda$),
although MOSC-RW (Auto-$\lambda$) achieves the best results in 10 settings, which is still better than all existing SC algorithms (Note that there are ties).
\item MOSC-GL has better model quality than MOSC-RW based on Ocut. 
MOSC-GL (O-$\lambda$) achieves the highest NMI and lowest $\epsilon_N$, $\epsilon_E$, and $\epsilon_T$ in 15 out of 20 settings while
MOSC-RW (O-$\lambda$) achieves the best in 10 settings (again there are ties).
\end{enumerate}

From the above results, we see that MOSC-GL and MOSC-RW have different performance on
networks with different triangle densities. MOSC-RW tends to be better for networks with 
high triangle densities while MOSC-GL tends to be better for networks with 
low triangle densities. For MOSC-GL, $\matrics{W}_T$ can 
dominate $\matrics{W}_X$ in $\matrics{W}_X=(1-\lambda) \matrics{W}_T + \lambda \matrics{W}$,
especially for dense networks. Each entry of $\matrics{W}_T$ denotes the number of
triangles containing the corresponding edge while $\matrics{W}$ is a binary matrix.
Therefore, for most non-zero pairs $(i,j$), $\matrics{W}_T(i,j)$ is much larger than
$\matrics{W}(i,j)$ especially for dense networks. This can be the reason that MOSC-GL is less sensitive to tuning
$\lambda$, or finding the appropriate $\lambda$ is more difficult. That is, $\matrics{W}_X$
tends to encode much less edge information. In contrast, MOSC-RW does not have such
issue since $\matrics{A}$ and $\matrics{P}$ are normalised and thus they are in similar scales
before linear combination. Therefore, 
 MOSC-RW has a better performance than MOSC-GL in SNAP networks and the dense full graph PBlogs.

\subsection{Computational Time} 
Figure \ref{time} compares the computational time of different methods on YouTube,
LJ, PBlogs and Football, using $k$-means to obtain the final clusters to avoid the effect of cut criteria. We have the following observations:
\begin{enumerate}
\item Both HOSVD and MOSC-RW  involve tensor construction and operations so they  are both more time consuming, in particular on dense networks such as LJ and PBlogs, where HOSVD is the slowest and MOSC-RW is the second slowest. The reason is that HOSVD uses more a complicated dimension reduction method than
MOSC-RW.
\item MOSC-GL is more efficient than MOSC-RW in all cases and has similar efficiency
as conventional SC methods on the whole.
\end{enumerate}

\begin{figure}
    \centering
    \includegraphics[scale=0.5]{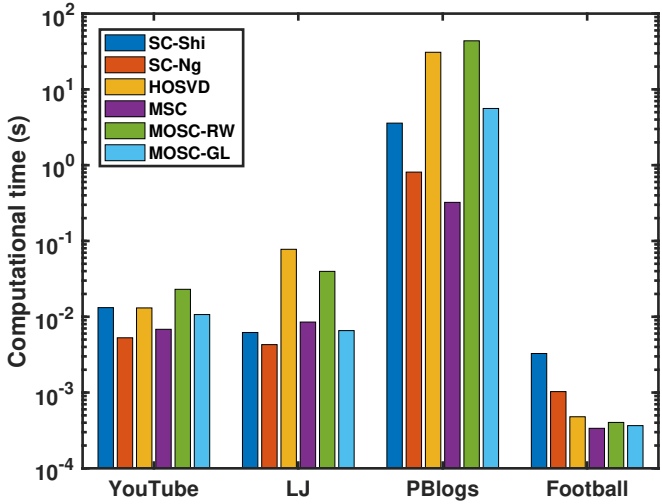}
    \caption{Computational time (in log scale) on YouTube, LJ, PBlogs and Football.} 
    \label{time}
\end{figure}

\section{Conclusion}
This paper proposed two mixed-order spectral clustering (MOSC) methods, MOSC-GL and MOSC-RW, which model both second-order and third-order structures for network clustering. MOSC-GL combines edge and triangle adjacency matrices with theoretical performance guarantee. MOSC-RW combines first-order and second-order random walks with a probabilistic interpretation. We designed a scheme to automatically determine the mixing  parameter and new structure-aware error metrics for structure evaluation. We also defined an optimal cut to study the lower error bound for exploring potentials of MOSC. Experiments on 2,005 networks showed that MOSC algorithms outperform existing SC methods in most cases and the proposed mixed-order approach produces more superior clustering of networks.

\appendices
\section{Proof of the Relationship between Ncut$_3$ and Nassoc$_3$}\label{secApp}

Shi and Malik \cite{meila2001learning} showed that second-order normalised cut (Ncut$_2$) and association (Nassoc$_2$) are equivalent by the following equation: 
\begin{equation}
    Ncut_2 (S;G) = 2 - Nassoc_2 (S;G),
\end{equation}
which indicates that the minimisation of Ncut$_2$ is equivalent to maximisation of Nassoc$_2$.
It inspires us to think about a question: does third-order normalised cut (Ncut$_3$) \cite{li2017inhomogoenous} and association (Nassoc$_3$) \cite{ghoshdastidar2017uniform} have the similar  equivalence? 

Based on the definition of $cut_3(S)$, we have the following equation \cite{tsourakakis2017scalable} (omitting $G$): 

\begin{eqnarray} \label{cut_3}
cut_3(\grasubset{S},\bar{\grasubset{S}}) &=& \frac{1}{3} (((vol_3(\grasubset{S}) - assoc_3(\grasubset{S}))\nonumber\\
&&+(vol_3(\bar{\grasubset{S}}) - assoc_3(\bar{\grasubset{S}}))).
\end{eqnarray}
Ncut$_3 (S)$ \cite{li2017inhomogoenous} is defined as follows:
\begin{equation}
\begin{aligned}
Ncut_3 (\grasubset{S}) &=
cut_3({\grasubset{S},\bar{\grasubset{S}}})(\frac{1}{{vol_3({\grasubset{S}})}}+
\frac{1}{vol_3({\bar{\grasubset{S}})}}).\\&
\end{aligned}
\end{equation}
This gives that: 
\begin{equation}
\begin{aligned}
Ncut_3 (\grasubset{S}) &=\frac{(vol_3(\grasubset{S}) - assoc_3(\grasubset{S}))
+(vol_3(\bar{\grasubset{S}}) - assoc_3(\bar{\grasubset{S}}))}{{3 vol_3({\grasubset{S}})}}
\\&+\frac{(vol_3(\grasubset{S}) - assoc_3(\grasubset{S}))
+(vol_3(\bar{\grasubset{S}}) - assoc_3(\bar{\grasubset{S}}))}{{3 vol_3({\bar{\grasubset{S}}})}}
\\&=\frac{2}{3} - \frac{1}{3}(\frac{assoc_3(\grasubset{S})}{vol_3(\grasubset{S})}+
\frac{assoc_3(\bar{\grasubset{S}})}{vol_3(\bar{\grasubset{S}})})
\\&+\frac{1}{3} (\frac{{vol_3({\bar{\grasubset{S}}})}-assoc_3(\bar{{\grasubset{S}})}}{vol_3({\grasubset{S}})}+\frac{{vol_3({\grasubset{S}})}-assoc_3({\grasubset{S})}}{vol_3({\bar{\grasubset{S}}})})
\end{aligned}
\end{equation}

The definition of $Nassoc_3(S)$ \cite{ghoshdastidar2017uniform} is given as:
\begin{equation}
\begin{aligned}
Nassoc_3(S) = \frac{assoc_3(\grasubset{S})}{vol_3(\grasubset{S})}+
\frac{assoc_3(\bar{\grasubset{S}})}{vol_3(\bar{\grasubset{S}})}
\end{aligned}
\end{equation}

The above equations indicate:

\begin{equation}
\begin{aligned}
Ncut_3 (\grasubset{S})&= \frac{2}{3} - \frac{1}{3}Nassoc_3(S)\\&
+ \frac{1}{3} (\frac{{vol_3({\bar{\grasubset{S}}})}-assoc_3({\bar{\grasubset{S}})}}{vol_3({\grasubset{S}})}+\frac{{vol_3({\grasubset{S}})}
-assoc_3({\grasubset{S})}}{vol_3({\bar{\grasubset{S}}})}).\label{eq:relation}
\end{aligned}
\end{equation}
From Eq. (\ref{eq:relation}), we can see that the third term in is not a constant. Therefore,
minimisation  of Ncut$_3$ is not equivalent to maximisation of Nassoc$_3$.

\IEEEpeerreviewmaketitle
\bibliographystyle{IEEEtran}
\bibliography{references}

\end{document}